\documentclass{article}
\newif\ifneuripsstyle
\IfFileExists{neurips_2026.sty}{
    \neuripsstyletrue
    \usepackage[preprint]{neurips_2026}
}{
    \neuripsstylefalse
    \usepackage[margin=1in]{geometry}
    \usepackage{times}
    \usepackage[numbers]{natbib}
}
\usepackage{amsmath,amssymb,amsthm}
\usepackage{booktabs}
\usepackage{graphicx}
\usepackage{hyperref}
\usepackage{xcolor}
\usepackage{enumitem}
\usepackage{microtype}
\usepackage{bbm}
\usepackage{pifont}

\newtheorem{proposition}{Proposition}
\newtheorem{remark}[proposition]{Remark}

\title{TRIAGE: Role-Typed Credit Assignment for Agentic Reinforcement Learning}
\author{
    Yuanda Xu$^{1}$\thanks{Equal contribution. Hejian Sang's work was done while at LinkedIn Corporation.}\,
    \thanks{Correspondence to \texttt{yuanda@math.princeton.edu}} \quad
    Zhengze Zhou$^{1}$\footnotemark[1] \quad
    Hejian Sang$^{1}$\footnotemark[1] \quad
    Xiaomin Li$^{2}$ \quad \\
    {\bf Jiaxin Zhang$^{3}$ \quad Xinchen Du$^{1,4}$\thanks{Work done during an internship at LinkedIn Corporation.} \quad Sen Na$^{4}$ \quad Zhipeng Wang$^{1}$ \quad Alborz Geramifard$^{1}$} \\
    \vspace{0.2cm}
    $^{1}$LinkedIn Corporation \quad
    $^{2}$Harvard University \quad
    $^{3}$Johns Hopkins University \quad
    $^{4}$Georgia Institute of Technology
}

\date{}

\newcommand{\method}{TRIAGE}

\newcommand{\verifier}{V}
\newcommand{\role}{\rho}
\newcommand{\Role}{\mathcal{R}}

\begin{document}
\maketitle

\begin{abstract}
Agentic reinforcement learning requires assigning credit to environment-facing actions such as searches, clicks, edits, navigation commands, and object interactions. Standard GRPO uses the final verifier outcome as a uniform advantage over all action tokens. This outcome signal is useful but structurally incomplete: it punishes useful exploration in failed rollouts and reinforces redundant or regressive actions in successful rollouts. We propose \method{}, a role-typed credit assignment framework that adds a semantic role axis to outcome credit. A structured judge classifies each segment as decisive progress, useful exploration, no-progress infrastructure, or regression, and a fixed role-conditioned rule maps these labels to bounded segment-level process rewards. This keeps verifier outcomes as the source of optimization direction while correcting the two main blind spots of outcome-only credit. We further show that the Bayes-optimal role-measurable correction is the L2 projection of the per-segment advantage residual onto the role variable, and that \method{}'s fixed role constants approximate this projection, reducing advantage estimation error whenever the judge is reliable; we connect this to lower-variance policy gradients. Across ALFWorld, Search-QA, and WebShop, \method{} improves success rates over GRPO for two policy models and outperforms both a scalar judge-derived process reward and an outcome-supervised shared-backbone value baseline. Ablations show that the gain comes from role typing rather than merely adding dense rewards: reliable detection of regression inside successful trajectories is the dominant contributor, while exploration credit provides a consistent secondary gain; on completed ALFWorld and WebShop rollouts, \method{} also reduces environment-facing turns by an additional $10.4\%$ and $14.8\%$ relative to GRPO.
\end{abstract}

\section{Introduction}

Reinforcement learning with verifiable rewards has become a standard recipe for improving language-model reasoning and agentic behavior \citep{shao2024deepseekmath,deepseek2025r1,luong2024reft,yu2025dapo,sang2026beyond}. In Group Relative Policy Optimization (GRPO), a policy samples multiple trajectories for a prompt, receives final rewards from a verifier, and assigns relative advantages to the sampled outputs. This recipe is attractive because it requires no learned value model and optimizes directly against the deployment policy. However, when the output is an agentic trajectory rather than a single answer, the central credit-assignment question changes: \emph{which environment-facing actions deserve credit when supervision arrives only as a final verifier outcome?}

The unit of decision in this setting is not an arbitrary token span. It is an environment-facing segment: a search query, click, file edit, command, object interaction, or tool call that changes either the external state or the agent's information state. Across WebShop, Search-QA, and ALFWorld, such segments range from decisive actions (final purchases, answer submissions, object placements) to information-gathering actions (searches, inspections, reads) and low-value infrastructure (repeated navigation or redundant clicks).

\begin{figure}[t]
    \centering
    \includegraphics[width=\linewidth]{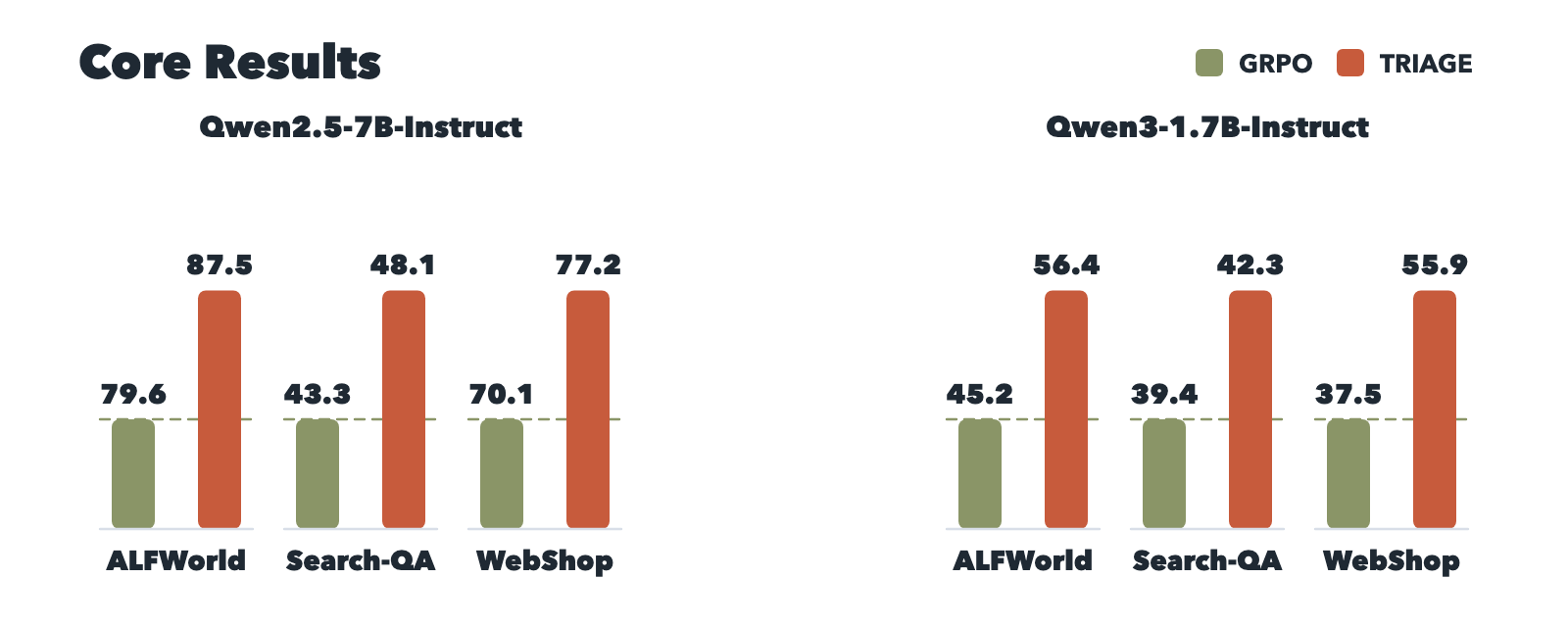}
    \caption{Core results. Across two policy models and three agentic benchmarks (ALFWorld, Search-QA, WebShop), \method{} consistently improves over the GRPO baseline (dashed line). Bar labels report mean success rate; vertical axes are truncated per panel to make differences visible.}
    \label{fig:core-results}
\end{figure}

Outcome credit is therefore useful but structurally incomplete. Standard GRPO treats all segments equally within a trajectory: if the trajectory succeeds, all action tokens are reinforced; if it fails, all are suppressed. This creates two systematic blind spots. First, failed rollouts can contain useful exploratory actions that should not inherit the full negative outcome credit. Second, successful rollouts can contain redundant or harmful actions that should not inherit positive credit merely because the agent later recovered. Final outcome tells us whether the trajectory solved the task, but it cannot say what local role each segment played.

Recent credit-assignment and on-policy supervision methods address parts of this problem. State-anchored estimators compare actions from matched states; process reward models learn dense progress signals; outcome-statistical methods estimate whether recurring segments concentrate in successful or failed rollouts; and token-importance methods reweight supervision within sampled outputs \citep{wang2025sparl,lu2026sdar,xu2026tip,sang2026crisp}. These approaches are useful, but they usually score each segment without specifying its semantic role: task progress, belief-state progress, harmless infrastructure, and regression should not receive the same credit rule. We test this distinction directly by comparing against two dense-signal controls---a scalar LLM process-reward baseline with the same judge and context window, and an outcome-supervised shared-backbone value baseline---so the empirical question is not whether dense segment rewards help, but whether role typing adds information beyond them.

Our central claim is therefore: \emph{agentic RL needs a role axis in addition to an outcome axis}. The most important distinction is that exploration is not no-progress. Exploration often has zero immediate task progress and may appear in both successful and failed trajectories. A purely outcome-statistical estimator can under-credit it because exploratory actions are not always success-specific. A generic process scorer can also conflate exploration with no-progress when no subgoal is completed immediately. Yet suppressing exploration is precisely how sparse-reward agent training becomes brittle: the policy learns to avoid information-gathering actions before it has enough information to act decisively.

We propose \method{}, a simple framework for role-aware credit estimation. Like medical triage, which sorts patients by the kind of attention they need before allocating treatment, \method{} first sorts each environment-facing segment into a semantic role before deciding how much credit it should inherit from the trajectory outcome. \method{} uses a structured LLM judge as a \emph{role classifier}, not as an unconstrained reward model. Given a bounded local context around each segment, the judge assigns one primary role: decisive progress, useful exploration, no-progress infrastructure, or regression. The RL algorithm then maps roles to different credit rules. Decisive progress receives strong outcome-aligned credit, useful exploration receives bounded positive credit, no-progress infrastructure is dampened toward zero, and regression is suppressed even when it appears in an otherwise successful trajectory.

This design deliberately separates semantic diagnosis from optimization direction. An LLM is well suited to answering local questions such as whether an action inspected a relevant file, narrowed a search, damaged state, or repeated known information. It is less suited to replacing the verifier. \method{} therefore keeps the GRPO outcome advantage as the base training signal and uses the role classifier only to add bounded process rewards or penalties at the segment level.

We make four contributions:
\begin{enumerate}[leftmargin=*]
    \item We identify two structural blind spots of outcome-only segment credit---useful exploration in failed rollouts and regression inside successful rollouts---and define a four-role taxonomy that adds a semantic role axis to trajectory-level outcome credit.
    \item We introduce \method{}, a role-conditioned credit assignment framework that uses a structured LLM judge for semantic role typing while keeping the GRPO outcome advantage as the source of optimization direction.
    \item We give a theoretical justification: the Bayes-optimal role-measurable correction is the L2 projection of the credit residual onto the role variable, and \method{}'s fixed role constants reduce advantage estimation error whenever aligned with this projection, connecting to unbiased, lower-variance policy gradients (Section~\ref{sec:theory}).
    \item We empirically evaluate \method{} across diverse agentic tasks and show consistent gains over GRPO, scalar judge-derived process rewards, and an outcome-supervised value baseline, while using manually labeled segments and role diagnostics to explain when the improvement comes from exploration retention, infrastructure damping, or regression suppression.
\end{enumerate}

\begin{figure}[t]
    \centering
    \includegraphics[width=1\linewidth]{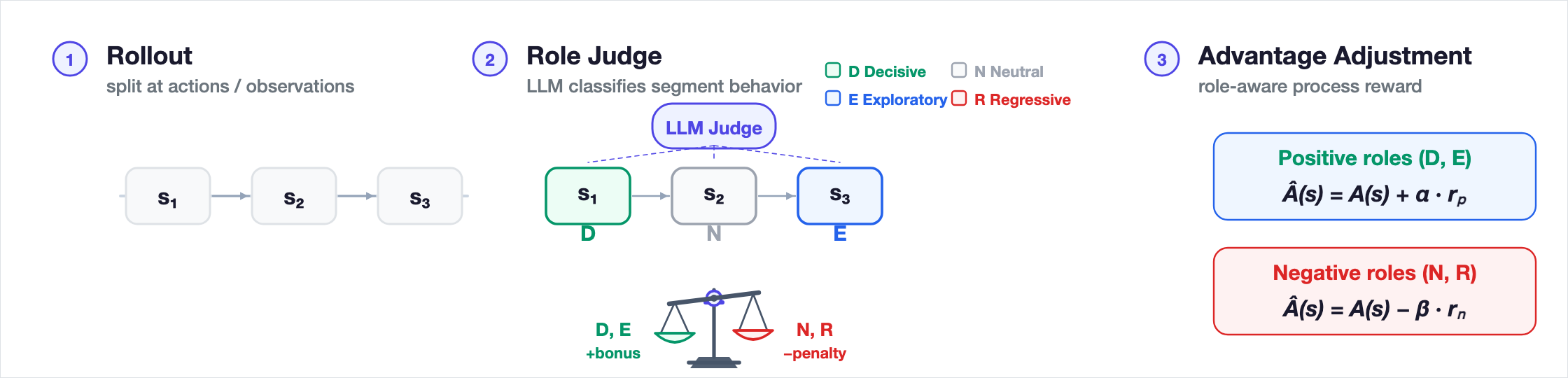}
    \caption{Overview of \method{}. Rollouts are split into environment-facing segments, a structured judge assigns semantic roles, and role-conditioned process rewards adjust segment-level GRPO advantages.}
    \label{fig:triage-overview}
\end{figure}

\section{Problem Setup: Segment Credit in Agentic RL}
\label{sec:setup}

\paragraph{GRPO.}
Given a task prompt $x$, GRPO samples $G$ trajectories, scores each with a verifier $r_i = \verifier(\tau_i)\in\{0,1\}$, and assigns the group-normalized advantage $A_i^{\mathrm{GRPO}} = (r_i - \bar{r})/(\sigma_r + \epsilon)$ uniformly to every token in the trajectory.
Some environment logs report raw success rewards on a different scale, such as 10 for success and 0 for failure; throughout training and in all equations, we binarize these raw rewards to $r_i\in\{0,1\}$.

\paragraph{From outcome credit to segment credit.}
An agentic trajectory $\tau_i = (a_{i,1}, o_{i,1}, \ldots, a_{i,K_i}, o_{i,K_i})$ consists of environment-facing action segments $a_{i,k}$ and their resulting observations $o_{i,k}$. Broadcasting a single $A_i^{\mathrm{GRPO}}$ to all segments treats a decisive purchase click, a useful diagnostic read, a harmless no-op, and a wrong edit identically. Process reward models offer one response by learning a dense value or progress score for each step \citep{lightman2023let}, but they do not by themselves specify whether a segment is exploration, infrastructure, or regression. Our goal is a segment-level advantage $A_{i,k}$ that reflects not only how good a segment is, but \emph{what role} it plays---which requires a structured label rather than a role-agnostic score.

\section{Why Outcome Credit Is Structurally Incomplete}
\label{sec:taxonomy}

Outcome credit supplies the correct trajectory-level direction, but it is a one-axis signal. It partitions rollouts into success and failure, then assigns all local decisions the same sign within each rollout. Agentic trajectories need a second axis: the local semantic role of each segment. Table~\ref{tab:outcome-role-conflicts} shows the two conflict cells that motivate this paper. A useful segment in a failed rollout should not be fully punished, and a regressive segment in a successful rollout should not inherit positive credit.

\begin{table}[t]
\centering
\caption{Outcome-only credit has two conflict cells. Final success or failure gives the optimization direction for the whole trajectory, but local segment roles determine whether a segment should inherit that direction unchanged.}
\label{tab:outcome-role-conflicts}
\footnotesize
\setlength{\tabcolsep}{5pt}
\begin{tabular}{@{}p{0.24\linewidth}p{0.33\linewidth}p{0.33\linewidth}@{}}
\toprule
Local segment role & Successful rollout & Failed rollout \\
\midrule
Useful local segment & should receive positive credit & should not be fully punished \\
Regressive local segment & should not inherit success credit & should be suppressed \\
\bottomrule
\end{tabular}
\end{table}

We instantiate this missing role axis with four segment types. Define a role variable
\begin{equation}
    \role_{i,k} \in \Role = \{D, E, N, R\},
\end{equation}
where $D$ denotes decisive progress, $E$ useful exploration, $N$ no-progress infrastructure, and $R$ regression. Table~\ref{tab:roles} gives the operational definition.

\begin{table}[t]
\centering
\caption{Role taxonomy}
\label{tab:roles}
\footnotesize
\setlength{\tabcolsep}{3pt}
\begin{tabular}{@{}p{0.16\linewidth}p{0.28\linewidth}p{0.28\linewidth}p{0.20\linewidth}@{}}
\toprule
Role & Definition & Examples & Default credit rule \\
\midrule
Decisive ($D$) & Produces verifier-checkable progress or completes a necessary subgoal & take target; buy item; correct answer & strong outcome-aligned \\
Exploration ($E$) & Reveals relevant state without immediate completion & read test; constrained search; inspect container & conditional positive \\
No-progress ($N$) & Changes neither task state nor belief state, but is harmless & duplicate click after completion; empty traversal & slightly penalized\\
Regression ($R$) & Corrupts state or repeats without information gain & wrong edit; wrong purchase; repeated examine/click & negative \\
\bottomrule
\end{tabular}
\end{table}

The taxonomy is intentionally not just an ordering by amount of progress. Exploration is not merely a small amount of progress. It is a different type of progress: it improves the information state rather than the environment state. This matters because many agent tasks are partially observable. Before editing a file, the agent must inspect relevant code and tests. Before buying an item, it must search and compare. Before manipulating an object, it may need to discover where the object or receptacle is. These actions should not be treated like repeated boilerplate just because they do not immediately satisfy the final verifier.

\paragraph{Role boundaries.}
The role boundaries are defined by what the segment changes. $D$ changes verifier-checkable task state: taking the target object, selecting the required item, submitting the correct answer, or applying the edit that makes a test pass. $E$ changes the information state without yet completing a subgoal: opening a container, reading a failing test, or running a targeted search. This boundary can be blurry in hindsight because an exploratory action may enable a later decisive one, but we reserve $D$ for direct task-state progress and use $E$ for first-time, reasonable information collection.

$N$ and $R$ cover the cases that should not receive positive progress credit. $N$ is harmless infrastructure that changes neither task state nor information state, such as an empty traversal or a generic command that does not affect the next decision. $R$ is locally harmful or redundant without information gain: a wrong edit, wrong purchase, corrupted object state, or repeated inspection/click after the relevant information is already known. Final outcome cannot resolve these distinctions. Useful exploration can appear in failed trajectories, and regression can appear in successful ones after later recovery, so role-aware credit must judge the local segment rather than only its trajectory-level success label.

\paragraph{What the judge must get right.}
The judge does not need perfect $D/E$ boundary agreement. Its key capability is \emph{asymmetric error correction}: in successful rollouts, find local regressions that should not inherit positive credit; in failed rollouts, find locally useful segments that should not inherit full negative credit. Operationally, regression has two subclasses: \emph{state corruption} (wrong edit, wrong purchase, wrong object) and \emph{redundant-without-information-gain} (repeated inspection or click after the information is already known).

\paragraph{Implications for diagnostics.}
The taxonomy also determines what we measure experimentally. Useful exploration is outcome-mixed: it appears in both successful and failed rollouts, so outcome association can make it look neutral or negative. No-progress infrastructure receives nonzero advantage under uniform broadcasting, wasting gradient on boilerplate actions. Regression can appear inside successful trajectories after later recovery, so final outcomes hide local harm. We therefore track three diagnostics in the experiments: exploration retention, infrastructure damping, and regression suppression.

\section{\method{}: Role-Conditioned Segment Credit}
\label{sec:method}

\method{} has two components: a structured role judge and a role-conditioned process reward. The policy update remains the standard GRPO update. Rather than using the LLM judge as an unconstrained scalar reward model, \method{} uses a rubric-guided judge to assign one auditable semantic role per segment, and maps those roles to fixed credit rules. The only change is the advantage assigned to each environment-facing segment: we keep the trajectory-level GRPO advantage and add a bounded process reward whose form depends on the segment role.

\paragraph{Role-judge context window.}
The training-time role judge uses a bounded local context window around each segment; in our experiments this window includes up to five previous and five future action--observation pairs. Appendix~\ref{app:judge-context-window} gives the exact window definition. The judge does not receive the final verifier outcome.

Let $A_i^{\mathrm{GRPO}}$ be the outcome advantage for trajectory $i$. For segment $a_{i,k}$, \method{} defines
\begin{equation}
    A_{i,k}^{\method} = A_i^{\mathrm{GRPO}} + \lambda c_{\hat{\rho}_{i,k}},
\end{equation}
where $c_{\hat{\rho}_{i,k}}$ is a fixed process reward for the assigned role and $\lambda$ controls how strongly this local signal is mixed into the GRPO advantage. The auxiliary judge scores are used only to help choose the role label, not as additional training-time notation.

A simple instantiation sets
\begin{equation}
    (c_D,c_E,c_N,c_R)=(1,0.5,-0.1,-0.5).
\end{equation}
Thus decisive progress receives a unit process reward, useful exploration receives a smaller positive reward, no-progress infrastructure receives only a small step cost, and regression receives a larger local penalty even if the trajectory succeeds. This scale follows the usual agent-RL convention that task progress is around $+1$, harmless inefficiency receives a mild penalty around $-0.1$, and clearly unhelpful actions receive a stronger negative reward. This keeps the main comparison close to GRPO: the dominant signal is still the outcome advantage, while role typing adds only a bounded segment-level process reward.

Unless otherwise stated, we use $\lambda=0.4$ for Search-QA and $\lambda=0.2$ for the other two environments, keeping $(c_D,c_E,c_N,c_R)=(1,0.5,-0.1,-0.5)$ fixed across tasks. The role constants are never tuned; the only tuned hyperparameter is $\lambda$, selected on the training split by training success rate with the test set held out for final evaluation. The $\lambda\times|c_R|$ grids in Appendix~\ref{app:sensitivity} are \emph{post-hoc} sensitivity analyses and were not used to choose $\lambda$.

For stability, the resulting segment advantages are whitened within each batch before being broadcast to segment tokens:
\begin{equation}
    \tilde{A}_{i,k}^{\method} = \frac{A_{i,k}^{\method} - \mu_{\mathcal{B}}}{\sigma_{\mathcal{B}}+\epsilon}.
\end{equation}
The policy update is the usual clipped GRPO objective with $\tilde{A}_{i,k}^{\method}$ assigned to tokens belonging to segment $k$.
In the evaluated environments, a segment coincides with the standard environment step used in prior agent-RL work: one admissible ALFWorld command, one WebShop \texttt{search[...]} or \texttt{click[...]} action, or one Search-QA $\langle$search$\rangle$ query or final $\langle$answer$\rangle$ submission. The segment advantage is applied only to generated tokens in the corresponding environment-facing turn; prompt and observation tokens are excluded from the policy loss.

\paragraph{Training procedure.}
In each GRPO batch, we first compute the usual trajectory advantage $A_i^{\mathrm{GRPO}}$. We then split each rollout into environment-facing action segments and ask the role judge for the segment role and auxiliary scores $(q,u,h,b)$. The role-conditioned process reward is added to the GRPO advantage, the resulting segment advantages are normalized within the batch, and each normalized value is broadcast to the tokens in that segment before the standard clipped GRPO update. No judge is used at evaluation time.

\subsection{Theoretical Justification: Role Conditioning as an L2 Projection of the Credit Residual}
\label{sec:theory}

We give a justification, not a guarantee: under a stated sufficiency assumption, the Bayes-optimal role-measurable correction is the L2 projection of the credit residual onto the role variable, and the fixed constants used by \method{} inherit a strictly smaller estimation error than uniform broadcasting whenever aligned with this projection. We connect this to lower-variance policy gradients and flag where the assumption fails in Appendix~\ref{app:theory-extended}; all proofs are in Appendix~\ref{app:proofs}.

\paragraph{Setup.}
Let $A_{i,k}^*$ denote the (unobserved) oracle per-segment advantage and let $A_i^{\mathrm{GRPO}}$ be the trajectory advantage that GRPO broadcasts to every segment. Define the \emph{credit residual}
\begin{equation}
    \delta_{i,k} \triangleq A_{i,k}^* - A_i^{\mathrm{GRPO}},
\end{equation}
the within-trajectory variation in true credit that uniform broadcasting discards. A segment-level estimator that adds a correction $g$ to $A_i^{\mathrm{GRPO}}$ incurs squared error $\mathbb{E}\big[(A_i^{\mathrm{GRPO}} + g - A_{i,k}^*)^2\big] = \mathbb{E}\big[(g - \delta_{i,k})^2\big]$.

\begin{proposition}[Optimal role-measurable correction]
\label{prop:projection}
Among all corrections $g(\rho)$ that are measurable with respect to the segment role $\rho_{i,k}$, the minimizer of the segment-advantage MSE is the conditional expectation of the residual,
\begin{equation}
    g^\star(\rho) = \mathbb{E}\big[\delta_{i,k} \,\big|\, \rho_{i,k}=\rho\big],
\end{equation}
and the resulting MSE reduction relative to GRPO is
\begin{equation}
    \mathrm{MSE}^{\mathrm{GRPO}} - \mathrm{MSE}^{g^\star} = \mathbb{E}\Big[\big(\mathbb{E}[\delta_{i,k}\mid\rho_{i,k}]\big)^2\Big] \;\ge\; 0.
\end{equation}
\end{proposition}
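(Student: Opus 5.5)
The argument is the standard $L^2$ projection argument. The setup already reduces the error of any additive correction to $\mathbb{E}[(g-\delta_{i,k})^2]$. We assume $\delta_{i,k}$ is square integrable, with all expectations taken over the joint law of prompts, rollouts, segments and roles. Then the problem is to find the best $L^2$ approximation of $\delta_{i,k}$ within the closed subspace of $\sigma(\rho_{i,k})$-measurable square-integrable functions. Since $\Role=\{D,E,N,R\}$ is finite, this subspace is just the set of functions $g:\Role\to\mathbb{R}$, so every step below is finite-dimensional.

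\textbf{Step 1: orthogonal decomposition.} Write $m(\rho)=\mathbb{E}[\delta_{i,k}\mid\rho_{i,k}=\rho]$. For any role-measurable $g$, we split
\[
g(\rho_{i,k})-\delta_{i,k}=\bigl(g(\rho_{i,k})-m(\rho_{i,k})\bigr)+\bigl(m(\rho_{i,k})-\delta_{i,k}\bigr).
\]
Expanding the square, the cross term is $\mathbb{E}[(g-m)(\rho_{i,k})\,(m(\rho_{i,k})-\delta_{i,k})]$. By the tower property, conditioning on $\rho_{i,k}$ and pulling out the measurable factor $(g-m)(\rho_{i,k})$, this equals $\mathbb{E}[(g-m)(\rho_{i,k})\cdot 0]=0$. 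Hence
\[
\mathbb{E}[(g-\delta)^2]=\mathbb{E}[(g-m)^2]+\mathbb{E}[(m-\delta)^2].
\]
This is minimized exactly when $g=m$ almost surely, which gives $g^\star=m$. Uniqueness holds up to roles of probability zero.

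\textbf{Step 2: the MSE reduction.} GRPO corresponds to the correction $g\equiv 0$, so $\mathrm{MSE}^{\mathrm{GRPO}}=\mathbb{E}[\delta^2]$. Applying the decomposition of Step 1 with $g=0$ gives $\mathbb{E}[\delta^2]=\mathbb{E}[m^2]+\mathbb{E}[(m-\delta)^2]$, and the second term is $\mathrm{MSE}^{g^\star}$. Subtracting yields
\[
\mathrm{MSE}^{\mathrm{GRPO}}-\mathrm{MSE}^{g^\star}=\mathbb{E}\bigl[(\mathbb{E}[\delta_{i,k}\mid\rho_{i,k}])^2\bigr],
\]
which is nonnegative as the expectation of a square.

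\textbf{Main obstacle.} The algebra is routine. The real care lies in fixing the probability space so that the objects are well defined. In particular, $A_i^{\mathrm{GRPO}}$ depends on the whole group of rollouts, and $A^*_{i,k}$ is an oracle quantity, so the expectation must be taken over a single sampling distribution of (group, trajectory index, segment index). Under that distribution, $\rho_{i,k}$ and $\delta_{i,k}$ are jointly defined random variables and $\delta_{i,k}$ has finite second moment. I would state this as a standing assumption, for example by noting that bounded rewards and $\epsilon>0$ make $A^{\mathrm{GRPO}}$ bounded, and assuming $A^*$ is square integrable. Once that assumption is in place, the tower-property step is fully justified.
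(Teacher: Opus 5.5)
Your proof is correct and follows the same route as the paper. The paper also treats the minimizer as the $L_2$ projection of $\delta$ onto the $\rho$-measurable functions and gets the reduction by specializing to $g\equiv 0$; where it cites the law of total variance, you write out the orthogonal decomposition explicitly. Your version is slightly more careful: when $\mathbb{E}[\delta]\neq 0$, the identity $\mathbb{E}[\delta^2]-\mathbb{E}[(\delta-g^\star)^2]=\mathbb{E}[(\mathbb{E}[\delta\mid\rho])^2]$ is really the second-moment decomposition rather than a variance decomposition, and your square-integrability assumption is what makes the projection well defined.
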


Proposition~\ref{prop:projection} formalizes the paper's central claim: role labels help \emph{exactly to the extent that they explain nonzero credit residual}, i.e. whenever $\mathbb{E}[\delta\mid\rho]\neq 0$ for some role. The four-role taxonomy is thus an interpretable, coarse discretization of the Bayes-optimal correction $g^\star$, with $g^\star(R)<0$ (regression is over-credited by broadcasting) and $g^\star(E)>0$ in failed rollouts (exploration is over-punished)---precisely the two conflict cells of Table~\ref{tab:outcome-role-conflicts}.

\method{} uses fixed role constants rather than estimating $g^\star$. For the correction $\lambda c_{\hat\rho}$, the MSE change relative to GRPO is
\begin{equation}
    \Delta_{\mathrm{MSE}} = \lambda^2 \mathbb{E}[c_{\hat\rho}^2] - 2\lambda\,\mathrm{Cov}(c_{\hat\rho},\delta),
\end{equation}
so any positively aligned role signal reduces error for sufficiently small $\lambda$. This is exactly the desired sign pattern: negative for regression that GRPO over-credits and positive for exploration that GRPO over-punishes. Appendix~\ref{app:theory-extended} gives the full fixed-constant condition, connects the correction to policy-gradient variance, and states the failure modes.

\section{Experiments}
\label{sec:experiments}

We design experiments to test role-aware credit rather than merely final performance. The central empirical question is whether \method{} preserves useful exploration while suppressing no-progress and regression.

\subsection{Experimental Setup}

\paragraph{Environments.}
We evaluate on three families of agentic tasks. ALFWorld tests embodied household planning with templated actions \citep{shridhar2021alfworld}. Search-QA tests multi-turn retrieval and answer generation, where query formulation and evidence gathering are exploratory \citep{dunn2017searchqa}. WebShop tests product search and purchase \citep{yao2022webshop}, where search/filter actions are exploratory and the purchase action is decisive.

\paragraph{Models and training.}
We evaluate Qwen2.5-7B-Instruct and Qwen3-1.7B-Instruct as deployable student policies for all three environments \citep{qwen2024qwen25}. Training uses GRPO with $G$ rollouts per prompt, implemented on top of the verl framework \citep{sheng2024hybridflow}. \method{} uses the same rollouts and verifier rewards as GRPO, plus cached role labels from an LLM judge. All final evaluations use the unaided deployment policy without judge calls. For ALFWorld and WebShop, we repeat training and evaluation with ten independent runs and report mean $\pm$ sample standard deviation. Search-QA runs are substantially more expensive because each optimization step requires large-model rollout with multi-turn retrieval and verifier evaluation, so Search-QA results are reported from a single run under the same fixed training configuration; consequently Search-QA entries in the tables do not include a standard deviation. The judge is used only during training; inference uses the unaided policy, so \method{} and GRPO share identical evaluation-time compute. A matched-wall-clock comparison that extends GRPO to the same total training-time compute as \method{} is reported in Appendix~\ref{app:training-hyperparameters}.

\subsection{Main Results}

\begin{table}[t]
\centering
\caption{Main results: success rate (\%). ALFWorld and WebShop entries with $\pm$ are mean $\pm$ sample standard deviation over ten independent training-and-evaluation runs. Search-QA is reported as a single run because the retrieval-augmented rollout loop makes repeated full training runs substantially more expensive. The ``no evidence'' rows use the same Qwen3-8B-thinking judge but with a prompt that does not ask for a per-segment evidence string and only requests the role label (see Appendix~\ref{app:judge-audit} for the full default prompt that does require evidence).}
\label{tab:main-results}
\footnotesize
\setlength{\tabcolsep}{4pt}
\begin{tabular}{llccc}
\toprule
Model & Method & ALFWorld & Search-QA & WebShop \\
\midrule
Qwen2.5-7B-Instruct & GRPO & $79.6 \pm 1.9$ & 43.3 & $70.1 \pm 2.3$ \\
% & SDAR (reported) & 85.9 & 49.0 & 82.8 \\
 & \method{} w/ Qwen3-8B no-think judge & $76.8 \pm 2.1$ & 45.0 & $65.4 \pm 2.8$ \\
 %& \method{} w/ 25\% corrupted Qwen3-8B-thinking labels & -- & -- & -- \\
 & \method{} w/ Qwen3-8B-thinking judge, no evidence & $83.1 \pm 2.6$ & 46.4 & $73.5 \pm 2.5$ \\
 & \method{} w/ Qwen3-8B-thinking judge & $87.5 \pm 2.4$ & $48.1$ & $77.2 \pm 2.2$ \\
\midrule
Qwen3-1.7B-Instruct & GRPO & $45.2 \pm 2.5$ & 39.4 & $37.5 \pm 2.8$ \\
 & \method{} w/ Qwen3-8B no-think judge & $40.7 \pm 2.1$ & 40.2 & $35.1 \pm 2.2$ \\
 & \method{} w/ Qwen3-8B-thinking judge, no evidence & $51.8 \pm 1.8$ & 41.1 & $49.6 \pm 1.9$ \\
 & \method{} w/ Qwen3-8B-thinking judge & $56.4 \pm 1.2$ & 42.3 & $55.9 \pm 1.5$\\
\bottomrule
\end{tabular}
\end{table}
Figure~\ref{fig:core-results} summarizes the main comparison, and Table~\ref{tab:main-results} reports the underlying numbers. With the default Qwen3-8B-thinking judge, \method{} improves over GRPO on all three benchmarks for both policies, with the largest gains on ALFWorld and WebShop---the two audited environments with the highest regression mass (48\% and 43\%; Appendix~\ref{app:role-audit}). The Search-QA gain is smaller but consistent, matching its more exploration-dominated, lower-regression profile. This pattern is what role-conditioned credit predicts: most of the benefit comes from withholding positive credit from regressive segments that vanilla GRPO reinforces whenever the trajectory happens to succeed.

The comparison also shows that the benefit depends on judge reliability rather than on simply adding a dense reward. Substituting the Qwen3-8B \emph{no-think} judge---which collapses on the $R$-in-success cell (Table~\ref{tab:judge-audit-f1})---drives \method{} \emph{below} the GRPO baseline on ALFWorld and WebShop for both policies, confirming that the gains stem from accurate role typing and not from the extra reward term alone. Removing the evidence requirement (``no evidence'' rows) keeps \method{} above GRPO but consistently trails the default prompt, so thinking is necessary for the hard $R$-in-success cell while structured evidence acts as a low-cost calibration knob on top of it.

\subsection{Does the Judge Recover the Conflict Cells?}

Because \method{} relies on a role judge, we audit whether the judge recovers local segment roles rather than simply echoing the final outcome. Two annotators independently label 135 environment-facing segments from 18 logged trajectories (3 ALFWorld, 3 WebShop, 12 Search-QA), reaching \textbf{88.1}\% raw agreement; disagreements are adjudicated by a senior annotator and used as ground truth. The prompt, labels, and examples are in Appendix~\ref{app:judge-audit}.

Table~\ref{tab:judge-audit-f1} reports binary F1 by role--outcome cell, focusing on the two conflict cells: $R$ inside successful rollouts and $E$ inside failed rollouts. We omit $D$ in failed rollouts because it has zero support in this labeled set.

\begin{table}[t]
\centering
\caption{Qwen3 role judge F1 (\%) across 135 labeled segments, split by hand-labeled role and trajectory outcome. Column counts give the number of positive examples in each cell.}
\label{tab:judge-audit-f1}
\footnotesize
\setlength{\tabcolsep}{4pt}
\begin{tabular}{lccccc}
\toprule
Config & $R$ in success rollouts & $R$ in failed rollouts & $E$ in success rollouts & $E$ in failed rollouts & $D$ in success rollouts \\
    & $(n=35)$ & $(n=20)$ & $(n=29)$ & $(n=21)$ & $(n=25)$ \\
\midrule
8B no-think  & 29.2 & 81.1 & 56.1 & 90.0 & 55.6 \\
8B think & 86.1 & \textbf{91.9} & \textbf{78.7} & \textbf{95.2} & 65.1 \\
14B no-think &  5.7 & 80.0 & 54.0 & 90.9 & 62.9 \\
14B think    & 72.7 & 86.5 & 70.8 & 90.9 & 56.4 \\
32B no-think & 35.9 & 74.3 & 56.4 & 82.6 & \textbf{73.7} \\
32B think    & \textbf{88.6} & 83.3 & 70.8 & 88.9 & 65.1 \\
\bottomrule
\end{tabular}
\end{table}

The result supports the two-blind-spot framing. Thinking is not uniformly useful; its large effect is concentrated in $R$-in-success, where it raises F1 from roughly 24 to 82 averaged over model sizes. The easy cell is $E$-in-failure (F1 $>82$ even without thinking); the hard cell is finding regression exactly where the verifier says the rollout succeeded. Scaling helps less than enabling thinking: 8B-thinking is within three F1 points of 32B-thinking on $R$-in-success at substantially lower inference cost. We therefore use Qwen3-8B with thinking enabled as the default judge.

\subsection{Comparisons and Ablations}
All comparisons and ablations in this section use Qwen2.5-7B-Instruct. We organize the analysis around three questions: how \method{} compares with stronger credit-assignment baselines, whether role typing adds value beyond generic dense process rewards, and whether the trained policy exhibits the intended behavioral changes.

\paragraph{External credit-assignment baselines.}
Table~\ref{tab:baseline-comparison} situates \method{} against stronger credit-assignment baselines reproduced under an identical protocol: PPO with a learned critic, GiGPO, which assigns step-level credit by grouping actions from recurring states \citep{feng2025gigpo}, and a shared-backbone value baseline that learns a dense per-segment signal from the same verifier rewards. \method{} improves over PPO on all three benchmarks without a separate value network. Relative to GiGPO, \method{} is higher on WebShop and statistically tied on ALFWorld, while GiGPO does not apply to Search-QA because its state grouping degenerates when per-step states almost never recur. Relative to the value baseline, \method{} tests the central claim of the paper: dense segment credit alone is not enough when productive and regressive actions have similar outcome-trained values, and the missing information is the segment's semantic role. The key difference is signal source: GiGPO derives micro-advantages \emph{structurally} from recurring states, the value baseline derives them \emph{statistically} from outcome regression, and \method{} derives them \emph{semantically} from role labels---targeting the conflict cells that role-agnostic dense signals cannot resolve.

\begin{table}[t]
\centering
\caption{Comparison with stronger credit-assignment baselines on Qwen2.5-7B-Instruct: success rate (\%). All methods are our own runs under an identical protocol (see Appendix~\ref{app:value-baseline} for the shared-backbone value baseline). The GRPO and \method{} rows are repeated from Table~\ref{tab:main-results} for reference. GiGPO's Search-QA entry is left blank because its step-level state grouping degenerates to episode-level GRPO when per-step states embed retrieved documents that almost never recur across rollouts.}
\label{tab:baseline-comparison}
\footnotesize
\setlength{\tabcolsep}{4pt}
\begin{tabular}{lccc}
\toprule
Method & ALFWorld & Search-QA & WebShop \\
\midrule
GRPO & $79.6 \pm 1.9$ & 43.3 & $70.1 \pm 2.3$ \\
PPO & $81.7 \pm 2.1$ & 45.3 & $71.5 \pm 2.0$ \\
GiGPO & $87.8 \pm 2.2$ & -- & $74.3 \pm 2.9$ \\
% PPO (reported by \citet{feng2025gigpo}) & $80.4 \pm 5.1$ & -- & $65.7 \pm 4.0$ \\
% GiGPO (reported by \citet{feng2025gigpo}) & $\mathbf{90.8 \pm 2.2}$ & -- & $72.8 \pm 3.2$ \\
Shared-backbone value baseline (App.~\ref{app:value-baseline}) & $85.2\pm 2.7$ & 46.8 & $70.8 \pm 3.7$ \\
\method{} w/ Qwen3-8B-thinking judge & $87.5 \pm 2.4$ & $\mathbf{48.1}$ & $\mathbf{77.2 \pm 2.2}$ \\
\bottomrule
\end{tabular}
\end{table}
The shared-backbone value baseline improves over GRPO on the two longer-rollout environments (ALFWorld $79.6\rightarrow85.2$, $+5.6$; Search-QA $43.3\rightarrow46.8$, $+3.5$), confirming that a learned dense per-segment baseline trained on the same verifier reward is a meaningful upgrade over uniform broadcast. On WebShop, however, it barely moves ($70.1\rightarrow70.8$, within run-to-run variance), while \method{} reaches $77.2$. The reason is structural: WebShop regressions are repeated clicks of an already-selected attribute that leave the observation almost unchanged, so an outcome-trained value head cannot separate the productive click from its redundant repeat, whereas the role classifier reads the action history and labels the repeat $R$. Appendix~\ref{app:value-baseline} gives the full analysis. \method{}'s per-step wall-clock is higher than GRPO's because of judge calls; extending GRPO to matched training wall-clock (Appendix~\ref{app:training-hyperparameters}, Table~\ref{tab:matched-compute}) narrows but does not close the gap, indicating that the gain is not simply extra compute.
\paragraph{Role-reward ablations.}

We also include a scalar process-reward baseline to separate the value of role typing from the value of adding any judge-derived dense reward. This baseline uses the same Qwen3-8B-thinking judge and the same local context window as \method{}, but asks for a single progress score $s_{i,k}\in[-1,1]$ rather than a discrete role. We add this score to the GRPO advantage as
\begin{equation}
        A_{i,k}=A_i^{\mathrm{GRPO}}+\lambda s_{i,k},
\end{equation}
and apply the same batch whitening as \method{}. This controls for judge access, local context, and dense reward shaping while removing role-conditioned credit rules. Thus the comparison isolates whether the advantage comes from a generic process reward or from the role-specific mapping that treats exploration, no-progress infrastructure, and regression differently.

  \begin{table}[t]
  \centering                     
\caption{Ablation results on Qwen2.5-7B-Instruct: success rate (\%). We test role-reward components, focusing on the frequent/high-impact failure modes: regression and exploration.}
  \label{tab:ablations}                
  \footnotesize
  \setlength{\tabcolsep}{4pt}
  \begin{tabular}{@{}p{0.32\linewidth}p{0.26\linewidth}ccc@{}}                            
  \toprule  
  Ablation & Change & ALFWorld & Search-QA & WebShop \\             
  \midrule  
  Raw GRPO & no role judge or process reward & $79.6 \pm 1.9$ & 43.3 & $70.1 \pm 2.3$ \\         
    Scalar process reward & 8B-thinking judge, no role typing & $84.8 \pm 2.8$ & 45.9 & $72.1 \pm 2.8$ \\
  No regression penalty & set $c_R=0$ & $81.4 \pm 2.6$ & 46.7 & $73.1 \pm 2.3$ \\                      
  No exploration bonus & set $c_E=0$ & $85.8 \pm 2.4$ & 47.5 & $75.5 \pm 2.2$ \\           
  \method{} & none & $\mathbf{87.5 \pm 2.4}$ & $\mathbf{48.1}$ & $\mathbf{77.2 \pm 2.2}$ \\         
  \bottomrule                  
  \end{tabular}                   
  \end{table}   

Table~\ref{tab:ablations} isolates the two role-reward components and the role-typing effect itself. The scalar process-reward baseline improves over GRPO, confirming that dense segment feedback is useful, but it remains below \method{} on every benchmark. Removing either role component further degrades \method{}, so the gain is not an artifact of simply adding a dense reward from the same judge. The regression penalty ($c_R$) is the dominant contributor: zeroing it costs $1.8$--$6.1$ points across benchmarks and leaves ALFWorld and WebShop only marginally above raw GRPO. The exploration bonus ($c_E$) provides a smaller but consistently positive top-up ($0.6$--$1.7$ points). This ordering matches the role audit: ALFWorld and WebShop carry regression mass of $\approx 48\%$ and $\approx 43\%$ (Appendix~\ref{app:role-audit}), so most of \method{}'s gain comes from suppressing $R$ credit inside successful trajectories. Consistent with this mechanism, \method{} also reduces completed-rollout length by $10.4\%$ and $14.8\%$ relative to GRPO on the two environments (Appendix~\ref{app:rollout-length}). \method{} is stable to the role-constant scale and $\lambda$ within a reasonable range (Appendix~\ref{app:sensitivity}).

\section{Discussion and Limitations}
\paragraph{Limitations.}
Role labels are semantic estimates, not ground truth. A judge can overvalue plausible exploration, miss subtle regressions, or rely too much on final outcomes. \method{} mitigates this by using the judge only for structured role diagnosis and keeping verifier outcomes as the base optimization signal, but it does not remove judge error.

Role usefulness is also context-dependent. The same search, read, or test command can be informative once and redundant later, so the classifier must condition on local state and redundancy rather than action strings alone. Finally, role-aware credit is not causal identification: it improves local attribution, but counterfactual environment interventions would be needed to prove that a segment was necessary.

\paragraph{Future work.}
This paper uses one primary role per segment to keep the signal auditable. A natural extension is a soft role distribution, e.g., $(p_D,p_E,p_N,p_R)$, with credit computed as an expectation under role-specific constants. This could better represent mixed segments, such as a search that reveals useful evidence while also introducing distractors, but it would require reliable calibration and stronger audit procedures.

\method{} is also compatible with segment bucketing and outcome-statistical estimators. Bucketing can decide which segments share statistical evidence, while role labels decide how that evidence should be interpreted. Combining the two is a promising direction for domains where exact action arguments are sparse and repeated segments are rare.

Finally, the discrete four-role label is only the first layer of role-aware judging. On harder tasks or stronger base agents, obvious loops, wrong purchases, and repeated inspections become rare, and the credit problem shifts from detecting coarse failures to estimating how much each segment advances the task or belief state. In that regime the same framework can use a stronger judge to assign finer-grained process rewards \emph{within} each role rather than a single discrete label.

\section{Related Work}

\begin{table}[t]
\centering
\caption{Where \method{} sits among agentic credit-assignment methods. \emph{Expl.\,$\neq$\,no-prog.}: separates useful exploration from harmless no-progress; \emph{Regr.\,in success}: can withhold credit from regressive steps inside successful rollouts; \emph{No state match}: works without recurring or matchable states.}
\label{tab:credit-design-space}
\footnotesize
\setlength{\tabcolsep}{6pt}
\begin{tabular}{@{}lcccc@{}}
\toprule
Method family & Granularity & \shortstack{Expl.\,$\neq$\\no-prog.} & \shortstack{Regr.\,in\\success} & \shortstack{No state\\match} \\
\midrule
Outcome / group RL & trajectory & \textcolor{red!65!black}{\ding{55}} & \textcolor{red!65!black}{\ding{55}} & \textcolor{green!55!black}{\checkmark} \\
GiGPO \citep{feng2025gigpo} & step & \textcolor{red!65!black}{\ding{55}} & partial & \textcolor{red!65!black}{\ding{55}} \\
Step / process rewards \citep{wang2025sparl,lightman2023let} & step & \textcolor{red!65!black}{\ding{55}} & \textcolor{red!65!black}{\ding{55}} & \textcolor{green!55!black}{\checkmark} \\
\method{} (ours) & step & \textcolor{green!55!black}{\checkmark} & \textcolor{green!55!black}{\checkmark} & \textcolor{green!55!black}{\checkmark} \\
\bottomrule
\end{tabular}
\end{table}

\paragraph{Agentic credit assignment.}
Agentic RL requires assigning credit across environment-facing decisions rather than only across tokens. Table~\ref{tab:credit-design-space} summarizes the closest design choices. State-anchored methods such as GiGPO compare actions taken from matched states \citep{feng2025gigpo}; stepwise progress and process-reward methods learn scalar dense scores for intermediate steps \citep{wang2025sparl,lightman2023let}. \method{} is complementary: it keeps the outcome advantage but adds a semantic role label, so the update can distinguish useful exploration from no-progress behavior and regression from ordinary low progress.

\paragraph{Process reward models and LLM judges.}
Process reward models provide dense supervision by scoring intermediate reasoning or agent steps \citep{lightman2023let}. LLM-as-judge methods can evaluate generated outputs, critique trajectories, or assign rubric scores \citep{shinn2023reflexion,madaan2023selfrefine,fang2026rubricopd,lan2026aillawbibliometric}. Unstructured process scores can be brittle: they may punish correct actions in failed trajectories, over-credit plausible narration, or conflate exploration with lack of progress. \method{} uses the judge more narrowly as a structured classifier over segment roles. This reduces the burden on the judge and makes the resulting signal easier to audit.

\paragraph{Exploration in language agents.}
Language agents often rely on information-gathering actions such as search, inspect, read, and test execution \citep{yao2023react,schick2023toolformer}. Related prompting and self-improvement methods also exploit multiple sampled reasoning paths, search trees, or self-generated rationales to expose useful intermediate information \citep{wang2022selfconsistency,yao2023tree,zelikman2022star}. These actions change the agent's belief state rather than immediately completing the task. In sparse-reward RL, such actions are easy to misclassify as neutral or wasteful. \method{} makes belief-state progress an explicit credit category, allowing training to preserve useful exploration while still suppressing redundant or irrelevant exploration.

\paragraph{On-policy distillation and token weighting.}
On-policy distillation and token-importance methods refine supervision on sampled trajectories \citep{xu2026tip,sang2026beyond,sang2026crisp,agarwal2024onpolicy,zhou2023causal,yu2026dismantling}. These methods mostly operate at token or response granularity. \method{} operates at the agentic segment level and can be applied to either RL advantages or distillation losses: role labels can gate which action turns receive strong distillation or reinforcement.

\section{Conclusion}

We argued that agentic credit assignment requires distinguishing what role each environment-facing segment plays. The key missing distinction is that exploration is not no-progress: an action can improve the agent's belief state without immediately completing a subgoal. \method{} operationalizes this idea with a structured role judge and role-conditioned credit rules, keeping the GRPO outcome advantage as the optimization direction while adding a bounded, role-typed correction. Across ALFWorld, Search-QA, and WebShop, this lifts success rates over GRPO for two policy models---by up to $7.9$ points on Qwen2.5-7B and $18.4$ on Qwen3-1.7B---and shortens completed rollouts by $10.4\%$--$14.8\%$, with ablations and a manual role audit confirming that suppressing regression inside successful trajectories is the dominant source of the gain. Theoretically, the Bayes-optimal role-measurable correction is the L2 projection of the credit residual onto the role variable, and \method{}'s fixed constants approximate this projection, so the benefit is tied directly to judge reliability, which our audit measures rather than assumes. By reinforcing decisive progress, preserving useful exploration, damping no-progress infrastructure, and suppressing regression, \method{} offers a principled path toward sparse-reward RL for agents whose success depends on information gathering and recovery.

\bibliographystyle{unsrtnat}
\bibliography{references}

\appendix

\section{Additional Theory and Proofs}
\label{app:proofs}

\begin{proof}[Proof of Proposition~\ref{prop:projection}]
Minimizing $\mathbb{E}[(g(\rho)-\delta)^2]$ over all $\rho$-measurable $g$ is an $L_2$ projection of $\delta$ onto the subspace of $\rho$-measurable functions; the minimizer is the conditional expectation $g^\star(\rho)=\mathbb{E}[\delta\mid\rho]$. Uniform GRPO is the special case $g\equiv 0$, with MSE $\mathbb{E}[\delta^2]$. By the law of total variance, $\mathbb{E}[\delta^2]-\mathbb{E}[(\delta-g^\star)^2]=\mathbb{E}[(\mathbb{E}[\delta\mid\rho])^2]\ge 0$.
\end{proof}

\begin{proposition}[MSE reduction under fixed constants]
\label{prop:control-variate}
With the fixed role correction $A_{i,k}^{\method}=A_i^{\mathrm{GRPO}}+\lambda c_{\hat\rho_{i,k}}$, the batch MSE satisfies
\begin{equation}
    \mathrm{MSE}^{\method} = \mathrm{MSE}^{\mathrm{GRPO}} + \lambda^2 \sigma_c^2 - 2\lambda\, \mathrm{Cov}(c_{\hat\rho},\, \delta),
\end{equation}
with $\sigma_c^2=\mathbb{E}[c_{\hat\rho}^2]$. \method{} strictly reduces MSE iff $\mathrm{Cov}(c_{\hat\rho},\delta)>0$ and $0<\lambda<2\,\mathrm{Cov}(c_{\hat\rho},\delta)/\sigma_c^2$, with optimum $\lambda^\star=\mathrm{Cov}(c_{\hat\rho},\delta)/\sigma_c^2$ and maximal reduction $\mathrm{Cov}^2(c_{\hat\rho},\delta)/\sigma_c^2$.
\end{proposition}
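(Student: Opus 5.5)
The plan is to expand the square directly, using the error identity from the Setup paragraph of Section~\ref{sec:theory}. Any additive correction $g$ to $A_i^{\mathrm{GRPO}}$ incurs error $\mathbb{E}[(g-\delta)^2]$, and GRPO is the case $g\equiv 0$, so $\mathrm{MSE}^{\mathrm{GRPO}}=\mathbb{E}[\delta^2]$. Substituting $g=\lambda c_{\hat\rho}$ gives
\begin{equation}
\mathrm{MSE}^{\method}=\mathbb{E}[(\lambda c_{\hat\rho}-\delta)^2]=\mathbb{E}[\delta^2]+\lambda^2\mathbb{E}[c_{\hat\rho}^2]-2\lambda\,\mathbb{E}[c_{\hat\rho}\delta].
\end{equation}
Expectations are taken either over the segment distribution or empirically over the batch; the algebra is identical in both readings. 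This is the stated identity once $\mathbb{E}[c_{\hat\rho}\delta]$ is identified with $\mathrm{Cov}(c_{\hat\rho},\delta)$. Since the statement defines $\sigma_c^2$ as the raw second moment $\mathbb{E}[c_{\hat\rho}^2]$, I will read ``$\mathrm{Cov}$'' consistently as the uncentered cross-moment $\mathbb{E}[c_{\hat\rho}\delta]$.

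This identification is the main obstacle. The literal centered covariance equals the cross-moment only if $\mathbb{E}[\delta]=0$ or $\mathbb{E}[c_{\hat\rho}]=0$. I would first try to justify $\mathbb{E}[\delta]=0$: if the oracle segment advantages average, within each group, to the same group-centered quantity that GRPO targets, then $\mathbb{E}[A^*]=\mathbb{E}[A^{\mathrm{GRPO}}]$ and hence $\mathbb{E}[\delta]=0$. If that is not natural in the setting, I fall back on the uncentered convention, which is what makes the formula exact as written, and I will flag this in a remark.

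The rest is elementary analysis of the quadratic $f(\lambda)=\sigma_c^2\lambda^2-2\kappa\lambda$, where $\kappa=\mathrm{Cov}(c_{\hat\rho},\delta)$. Assume $\sigma_c^2>0$, that is, not all used constants are zero; this holds for $(1,0.5,-0.1,-0.5)$. Factoring gives $f(\lambda)=\lambda(\sigma_c^2\lambda-2\kappa)$.
\begin{itemize}
\item If $\kappa>0$, then $f<0$ exactly on the open interval $(0,2\kappa/\sigma_c^2)$.
\item If $\kappa\le 0$, then $f<0$ has no solutions with $\lambda>0$.
\end{itemize}
Since $\lambda>0$ is implicit in the method, this gives the iff. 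If negative $\lambda$ were allowed, a symmetric interval would appear for $\kappa<0$, so I will state the restriction $\lambda>0$ explicitly. Completing the square gives the minimizer $\lambda^\star=\kappa/\sigma_c^2$ and the minimum value $f(\lambda^\star)=-\kappa^2/\sigma_c^2$, which is the maximal reduction.

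Finally, I would note the link to Proposition~\ref{prop:projection}. The family $\lambda c_{\hat\rho}$ is a one-dimensional subspace of the $\hat\rho$-measurable functions, so $\lambda^\star c_{\hat\rho}$ is the projection of $\delta$ onto that line. Its reduction $\kappa^2/\sigma_c^2$ is therefore bounded by the full projection gain $\mathbb{E}[(\mathbb{E}[\delta\mid\hat\rho])^2]$, by Cauchy--Schwarz applied after conditioning on $\hat\rho$.
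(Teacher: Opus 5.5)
Your proposal is correct and follows the paper's own proof: expand $(\lambda c_{\hat\rho}-\delta)^2$, average over the batch, and read the sign condition, the optimum $\lambda^\star$, and the maximal reduction off the convex quadratic in $\lambda$. Your extra care goes beyond what the paper's one-line proof states; it silently assumes that $\mathrm{Cov}$ means the uncentered cross-moment $\mathbb{E}[c_{\hat\rho}\delta]$ (or that $\mathbb{E}[\delta]=0$), and your explicit $\lambda>0$ and $\sigma_c^2>0$ are what make the ``iff'' hold.
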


\begin{proof}[Proof of Proposition~\ref{prop:control-variate}]
Expand $(A_i^{\mathrm{GRPO}}+\lambda c_{\hat\rho}-A^*)^2=(\lambda c_{\hat\rho}-\delta)^2=\delta^2+\lambda^2 c_{\hat\rho}^2-2\lambda c_{\hat\rho}\delta$ and average over the batch; the correction $\lambda^2\sigma_c^2-2\lambda\mathrm{Cov}$ is a convex quadratic in $\lambda$, minimized at $\lambda^\star$.
\end{proof}

\section{Extended Theoretical Discussion}
\label{app:theory-extended}

This appendix expands the short discussion following Proposition~\ref{prop:control-variate}: why the fixed constants should align with the residual, how the correction connects to policy-gradient variance, and when the argument fails.

\paragraph{Alignment of fixed constants.}
The covariance $\mathrm{Cov}(c_{\hat\rho},\delta)$ is maximized when the role constants match the sign pattern of the optimal correction $g^\star(\rho)=\mathbb{E}[\delta\mid\rho]$. In the two conflict cells, this means assigning negative credit to $R$ segments inside successful trajectories, which GRPO would otherwise over-credit, and positive credit to useful $E$ segments inside failed trajectories, which GRPO would otherwise over-punish. The constants $(c_D,c_E,c_N,c_R)=(1,0.5,-0.1,-0.5)$ implement this ordering without per-environment tuning.

\paragraph{From estimation error to policy-gradient variance.}
The target of training is policy improvement, not estimation accuracy per se. The bridge is standard: in policy-gradient estimators, adding any \emph{action-history--measurable} baseline to the advantage leaves the gradient \emph{unbiased} while changing its variance, and the variance-minimizing baseline is the conditional expectation of the return \citep{greensmith2004variance,schulman2016gae}. Role labels are functions of the local action--observation window, hence admissible baselines; Proposition~\ref{prop:projection} identifies the role-measurable correction that minimizes residual energy, and Proposition~\ref{prop:control-variate} shows the fixed-constant surrogate reduces it whenever aligned. Because \method{} additionally whitens within the batch (Eq.~4), only the \emph{sign and relative ordering} of the correction must be correct---an order-preserving transform of an aligned correction remains aligned (Appendix~\ref{app:sensitivity}).

\begin{remark}[Where the assumption fails]
\label{rem:limits}
The benefit hinges on role labels capturing a nontrivial share of the credit residual ($\mathbb{E}[\delta\mid\rho]\neq 0$) \emph{and} on the judge recovering $\rho$ accurately enough to keep $\mathrm{Cov}(c_{\hat\rho},\delta)>0$. Both can fail: (i)~role is only a partial summary of local credit, so $g^\star$ leaves residual error---e.g.\ the $D/E$ boundary is genuinely ambiguous (Table~\ref{tab:judge-audit-f1}, $D$-in-success F1 $\approx 65$); and (ii)~an unreliable judge can drive $\mathrm{Cov}(c_{\hat\rho},\delta)\le 0$, in which case no $\lambda>0$ helps. This is the theoretical counterpart of the no-think judge degrading \method{} below GRPO (Table~\ref{tab:main-results}) and of the degradation at large $\lambda$ and $|c_R|$ (Table~\ref{tab:sensitivity-sweep}). We therefore present these results as a justification conditional on judge reliability, which our audit (Section~\ref{sec:experiments}, Appendix~\ref{app:judge-audit}) measures directly rather than assumes.
\end{remark}

\section{Training Hyperparameters}
\label{app:training-hyperparameters}

\begin{table}[h]
\centering
\small
\caption{Training hyperparameters. Here $\eta$ is the learning rate, $G$ is the number of rollouts per prompt, Steps is the number of optimization steps, $\epsilon$ is the GRPO clip ratio, $\lambda$ is the role-reward mixing coefficient, $\beta$ is unused by \method{}, $\alpha_{\mathrm{KL}}$ is the KL coefficient, $c_{\rho}$ denotes role-reward constants, $L_{\mathrm{p}}$ is the maximum prompt length, $L_{\mathrm{r}}$ is the maximum response length, and $B$ is the PPO mini-batch size.}
\label{tab:training-hyperparameters}
\begin{tabular}{lcccccccccccc}
\toprule
Method & $\eta$ & $G$ & Steps & $\epsilon$ & $\lambda$ & $\beta$ & $\alpha_{\mathrm{KL}}$ & $c_{\rho}$ & $L_{\mathrm{p}}$ & $L_{\mathrm{r}}$ & $B$ \\
\midrule
GRPO & $10^{-6}$ & 8 & 150 & 0.2 & -- & -- & 0.01 & -- & 4096 & 512 & 64 \\
\method{} & $10^{-6}$ & 8 & 150 & 0.2 & 0.2-0.4 & -- & 0.01 & $(1,0.5,-0.1,-0.5)$ & 4096 & 512 & 64 \\
\bottomrule
\end{tabular}
\end{table}

\paragraph{Computational overhead.}
\method{} adds an LLM judge call per segment during training, which increases per-batch wall-clock time. However, the relevant comparison is not raw compute parity but whether the same compute spent on additional GRPO training yields equivalent gains. In our experiments, the GRPO baseline is already near saturation at 150 steps: extending training to 300 steps yields ALFWorld success below 85\% and WebShop below 75\%, still short of the \method{} results ($87.5$ and $77.2$ respectively). The performance plateau is expected because the credit-assignment bottleneck is structural---broadcasting a single trajectory advantage over 10--30 segments dilutes gradient regardless of how many optimization steps are taken---and more steps cannot fix a noisy per-segment signal.

\paragraph{Matched wall-clock comparison.}
Table~\ref{tab:matched-compute} makes this comparison concrete. Under our implementation, one \method{} optimization step costs roughly $1.9\times$ the wall-clock of one GRPO step on average, driven almost entirely by the per-segment judge call (rollout, verifier, and policy update are unchanged); the exact multiplier varies across benchmarks because per-rollout segment count differs (e.g.\ shorter WebShop rollouts incur less judge overhead than longer ALFWorld rollouts), so the ``matched wall-clock'' GRPO step count is $<300$ and benchmark-dependent (roughly $255$--$285$ steps). To keep a single, uniformly comparable GRPO configuration across benchmarks, we conservatively over-match by training GRPO to $300$ steps on both environments---slightly exceeding \method{}'s wall-clock on either benchmark, so any residual gap in favor of \method{} cannot be attributed to a compute deficit. We report Qwen2.5-7B-Instruct on the two environments (ALFWorld, WebShop) whose full ten-run repeat protocol is affordable at doubled step count; Search-QA is omitted because each extended run is prohibitively expensive under the retrieval-augmented rollout loop.

\begin{table}[h]
\centering
\footnotesize
\setlength{\tabcolsep}{6pt}
\caption{Matched-wall-clock comparison on Qwen2.5-7B-Instruct: success rate (\%). Under our implementation, one \method{} optimization step costs $\approx 1.9\times$ the wall-clock of one GRPO step (per-segment judge call), so \method{} at $150$ steps corresponds to roughly $255$--$285$ GRPO steps of wall-clock, depending on per-benchmark rollout length. For a uniform comparison we conservatively use $300$ GRPO steps on both environments---strictly exceeding \method{}'s wall-clock on either. GRPO@300 uses the same recipe as GRPO@150 with only the step count changed. Entries are mean $\pm$ sample standard deviation over ten runs.}
\label{tab:matched-compute}
\begin{tabular}{@{}lcc@{}}
\toprule
Method & ALFWorld & WebShop \\
\midrule
GRPO (default, $150$ steps)                                    & $79.6 \pm 1.9$ & $70.1 \pm 2.3$ \\
GRPO ($\ge$ matched wall-clock, $300$ steps)                   & $83.7 \pm 2.2$ & $73.4 \pm 2.6$ \\
\method{} ($150$ steps, 8B-think judge)                        & $\mathbf{87.5 \pm 2.4}$ & $\mathbf{77.2 \pm 2.2}$ \\
\bottomrule
\end{tabular}
\end{table}

Even with GRPO trained beyond matched wall-clock, the gap does not close: GRPO gains $+4.1$ points on ALFWorld and $+3.3$ on WebShop from the extra $150$ optimization steps, but remains $3.8$ and $3.8$ points below \method{} on the two environments respectively, i.e., outside the $\pm 2$--$3$ point run-to-run variability of both methods. Because $300$ steps strictly exceeds \method{}'s wall-clock on both benchmarks, the residual gap cannot be attributed to a compute deficit. The plateau matches the prose above: additional GRPO steps refine the policy under a fixed, structurally noisy credit signal, whereas \method{} changes the credit signal itself. Consequently, at (over-)matched wall-clock the judge cost is not amortizable by more GRPO training under our protocol.

From a long-rollout perspective, the LLM judge is also structurally advantageous in several respects: (i)~credit dilution worsens with trajectory length, so the marginal value of correct per-segment attribution grows with the number of segments; (ii)~unlike a learned value critic (as in PPO), the LLM judge generalizes zero-shot across environments without requiring environment-specific training data or reward-model fitting; and (iii)~the judge leverages semantic reasoning about task goals, information gain, and state corruption that a scalar critic trained on sparse binary rewards cannot easily acquire. Thus, while the judge adds inference cost, it addresses a qualitatively different bottleneck than the one more training steps would solve.

\section{Shared-Backbone Value Baseline}
\label{app:value-baseline}

To isolate the contribution of \emph{role typing} from the contribution of any dense per-segment signal, we compare \method{} against a shared-backbone value baseline. This baseline keeps the GRPO policy update but attaches a learned scalar value head to the same policy backbone and trains it on the same on-policy rollouts. The recipe follows the standard actor--critic instantiation used in PPO-style RLHF \citep{schulman2017proximal} and the outcome-supervised value learning popularized by \citet{wang2024mathshepherd}, adapted to the agentic segment setting.

\paragraph{Architecture.}
The value head $V_\phi: \mathbb{R}^{d_{\mathrm{model}}}\to\mathbb{R}$ is a single linear projection on top of the final-layer hidden state of the policy backbone, evaluated at the last token of each segment's observation. The backbone is shared with the policy and kept frozen throughout training, so only $\phi$ (a few thousand parameters) receives gradients. This avoids a separate critic network and keeps the additional wall-clock cost negligible relative to GRPO.

\paragraph{Labels: no extra annotation required.}
We do not collect any process-level labels and do not call an external judge. The value head is supervised on per-segment discounted Monte-Carlo returns derived from the same binary verifier reward GRPO already computes,
\begin{equation}
    y_{i,k} = \gamma^{T_i - k}\, r_i, \qquad r_i = \verifier(\tau_i)\in\{0,1\},
\end{equation}
where $T_i$ is the number of environment-facing segments in trajectory $i$. The head is trained by mean-squared regression $\mathcal{L}_V(\phi)=\tfrac{1}{N}\sum_{i,k}\big(V_\phi(s_{i,k}) - y_{i,k}\big)^2$ jointly with each GRPO step on the freshly collected rollouts. This is the same outcome-only supervision Math-Shepherd-style PRMs use, but with the policy backbone shared rather than a separate model fitted on logged data.

\paragraph{Mixing into GRPO.}
At credit-assignment time the head's per-segment value increment is added to the trajectory advantage and whitened with the same batch statistics as \method{} before broadcasting to segment tokens:
\begin{equation}
    A_{i,k} = A_i^{\mathrm{GRPO}} + \lambda\big(V_{\bar\phi}(s_{i,k}) - V_{\bar\phi}(s_{i,k-1})\big),
\end{equation}
where $\bar\phi$ is an exponential-moving-average copy of $\phi$ used to decouple value updates from policy updates.

\paragraph{Hyperparameters.}
GRPO parameters ($\eta$, $G$, optimization steps, clip ratio $\epsilon$, KL coefficient $\alpha_{\mathrm{KL}}$, $L_{\mathrm{p}}$, $L_{\mathrm{r}}$, batch size $B$) are shared with \method{} (Table~\ref{tab:training-hyperparameters}). Value-head--specific settings: discount $\gamma=0.95$ for ALFWorld and WebShop and $\gamma=0.9$ for Search-QA (reflecting its shorter answer-terminating rollouts); head learning rate $\eta_V=10^{-4}$; 10-step head warmup at $\lambda=0$ so $\phi$ converges to a reasonable baseline before being injected into the policy update; EMA target update rate $\tau=0.99$; per-segment value increment clipped to $[-0.5, 0.5]$ to bound early-training noise; mixing coefficient $\lambda$ matched to \method{}'s value per benchmark ($\lambda=0.4$ on Search-QA, $\lambda=0.2$ on ALFWorld and WebShop), so any performance difference reflects the source of the dense signal rather than its scale.

\paragraph{What this baseline isolates.}
Both \method{} and the shared-backbone value baseline add a bounded, $\lambda$-scaled dense per-segment correction on top of the same GRPO advantage; both whiten within the batch; both use only labels that the GRPO loop already produces (verifier rewards alone for the value baseline, verifier rewards plus role labels from a small judge for \method{}). The remaining methodological difference is the \emph{source} of the per-segment signal: a learned scalar critic regressing trajectory-level outcomes, versus a semantic role classifier with role-conditioned credit rules. Table~\ref{tab:baseline-comparison} shows that the value baseline improves over GRPO on the two longer-rollout environments (ALFWorld $79.6\rightarrow85.2$, $+5.6$; Search-QA $43.3\rightarrow46.8$, $+3.5$) but barely moves WebShop ($70.1\rightarrow70.8$, well inside run-to-run variance), while \method{} reaches $87.5$/$48.1$/$77.2$. The per-benchmark gap to \method{} ($-2.3$/$-1.3$/$-6.4$) is largest precisely on WebShop, where regressions take the form of re-clicks of an already-selected attribute that leave the visible observation almost unchanged; the value head therefore receives near-identical Monte-Carlo targets for the productive click and its redundant repeat and credits them near-identically, while the role classifier reads the action history and labels the repeat $R$. The pattern is consistent with the intended interpretation: outcome-trained scalar critics capture coarse per-segment progress when the observation actually evolves, but cannot supply role-level distinctions in action spaces where harmful repetitions leave the local state intact.

\section{Rollout Efficiency}
\label{app:rollout-length}

Because \method{} suppresses no-progress infrastructure and regression, trained policies should complete tasks with fewer environment-facing actions than GRPO. Table~\ref{tab:app-rollout-length} measures rollout length as the number of action--observation segments per completed evaluation trajectory.

\begin{table}[h]
\centering
\caption{Post-training rollout length on Qwen2.5-7B-Instruct.}
\label{tab:app-rollout-length}
\footnotesize
\setlength{\tabcolsep}{5pt}
\begin{tabular}{@{}lccc@{}}
\toprule
Environment & Starting policy & GRPO length & \method{} length \\
\midrule
ALFWorld & 43.9 & $24.45 \pm 1.86$ & $21.90 \pm 2.03$ \\
WebShop & $14.80 \pm 0.18$ & $8.00 \pm 0.45$ & $6.82 \pm 0.24$ \\
\bottomrule
\end{tabular}
\end{table}

The length results show that both RL methods learn shorter trajectories than the starting policy, but \method{} removes more redundant interaction steps than GRPO. On ALFWorld, GRPO reduces the average completed-trajectory length from $43.9$ to $24.45$ segments, while \method{} further reduces it to $21.90$, an additional $10.4\%$ reduction relative to GRPO. On WebShop, GRPO reduces rollout length from $14.80$ to $8.00$ segments, while \method{} reaches $6.82$, an additional $14.8\%$ reduction. This matches the intended mechanism of role-conditioned credit: suppressing repeated inspections, redundant attribute clicks, and other no-progress or regressive segments improves not only success rate but also interaction efficiency. The effect is especially important for long-horizon agents, where every unnecessary environment-facing step compounds inference cost and increases the opportunity for later mistakes.

\section{Sensitivity to Role Constants and $\lambda$}
\label{app:sensitivity}

The main text fixes the role constants $(c_D,c_E,c_N,c_R)=(1,0.5,-0.1,-0.5)$ and tunes only the mixing coefficient $\lambda$ per environment, with $\lambda$ selected on the training split alone (Section~\ref{sec:method}). This appendix probes how sensitive \method{} is to these choices along the two axes that matter most for the conflict cells: the magnitude of the regression penalty $|c_R|$ and the overall mixing strength $\lambda$. The sweeps below are \emph{post-hoc} diagnostics computed on the test set after $\lambda$ was already fixed; they characterize robustness and were not used to select any reported hyperparameter.

All runs use Qwen2.5-7B-Instruct with the default Qwen3-8B-thinking judge; every other hyperparameter is held at its main-text value.

\paragraph{Joint $\lambda \times |c_R|$ sweep.}
Table~\ref{tab:sensitivity-sweep} sweeps $\lambda \in \{0.1, 0.2, 0.4\}$ against $|c_R| \in \{0.25, 0.5, 1.0\}$ on WebShop, keeping $(c_D,c_E,c_N)=(1,0.5,-0.1)$ fixed. The default configuration ($\lambda=0.2$, $|c_R|=0.5$) is highlighted.

Success rate is stable across the interior of the grid and degrades only at the corners, where either an overly large penalty ($|c_R|=1.0$) or an overly strong mixing ($\lambda=0.4$) begins to over-punish segments the judge mislabels as $R$.

\begin{table}[h]
\centering
\caption{WebShop success rate (\%) on Qwen2.5-7B-Instruct under a joint sweep of the mixing coefficient $\lambda$ and the regression-penalty magnitude $|c_R|$. Entries are mean $\pm$ sample standard deviation over ten runs. The default \method{} configuration ($\lambda=0.2$, $|c_R|=0.5$) is shown in bold. GRPO baseline: $70.1 \pm 2.3$.}
\label{tab:sensitivity-sweep}
\footnotesize
\setlength{\tabcolsep}{8pt}
\begin{tabular}{@{}lccc@{}}
\toprule
 & $|c_R|=0.25$ & $|c_R|=0.5$ & $|c_R|=1.0$ \\
\midrule
$\lambda=0.1$ & $74.8 \pm 2.6$ & $75.6 \pm 2.4$ & $74.1 \pm 2.7$ \\
$\lambda=0.2$ & $76.0 \pm 2.5$ & $\mathbf{77.2 \pm 2.2}$ & $74.9 \pm 2.8$ \\
$\lambda=0.4$ & $74.3 \pm 2.7$ & $74.6 \pm 2.9$ & $71.8 \pm 3.1$ \\
\bottomrule
\end{tabular}
\end{table}

\paragraph{Varying $|c_R|$ at the default $\lambda$.}
Isolating $|c_R|$ at the per-environment default $\lambda$ confirms the same robustness on the two environments not covered by the WebShop grid above. Extending the zero-penalty ablation of Table~\ref{tab:ablations} to halved, default, and doubled penalties, ALFWorld success for $|c_R|\in\{0,0.25,0.5,1.0\}$ is $81.4$/$85.9$/$87.5$/$85.1$ and Search-QA is $46.7$/$47.6$/$48.1$/$46.9$, where $|c_R|=0$ reproduces the ``no regression penalty'' row of Table~\ref{tab:ablations} and $|c_R|=0.5$ is the \method{} default. The corresponding WebShop trend is the $\lambda=0.2$ row of Table~\ref{tab:sensitivity-sweep} ($76.0$/$77.2$/$74.9$ for $|c_R|\in\{0.25,0.5,1.0\}$). In all three environments, halving $|c_R|$ retains most of the gain while doubling it stays above GRPO but begins to erode performance, consistent with heavier punishment of misjudged exploration in the more under-explored Search-QA setting.

\paragraph{Takeaway.}
The sensitivity results support two conclusions. First, \method{} does not rely on a knife-edge choice of $c_R$: both the half-penalty and default settings remain well above GRPO and the $c_R=0$ ablation.

Second, performance degrades when the role correction becomes too aggressive, especially at larger $\lambda$ and doubled $|c_R|$, matching the expected failure mode of over-penalizing judge false positives for $R$. We therefore use the default constants as a conservative operating point rather than as a heavily tuned optimum.

\paragraph{Interaction with batch whitening.}
Equation~(4) whitens the \emph{combined} advantage $A_{i,k}^{\method}=A_i^{\mathrm{GRPO}}+\lambda c_{\hat{\rho}_{i,k}}$ within each batch before broadcasting it to tokens. A natural concern is that a batch containing many large negative $R$ corrections could shift $\mu_{\mathcal{B}}$ and inflate $\sigma_{\mathcal{B}}$ enough to undo the intended penalty.

Two properties bound this effect. First, whitening is an order-preserving affine map: subtracting a common $\mu_{\mathcal{B}}$ and dividing by a positive $\sigma_{\mathcal{B}}$ cannot reverse the relative ordering of two segments, so a segment that received a lower combined advantage because it was labeled $R$ still receives a lower normalized advantage than its non-$R$ peers in the same outcome group. The whitening rescales the \emph{magnitude} of the correction but never flips its \emph{sign}.

Second, the correction is deliberately small relative to the outcome advantage: with $\lambda\le 0.4$ and the audited role distribution, the role term contributes a raw standard deviation of only $0.09$--$0.28$ (Section~\ref{sec:method}), so it perturbs rather than dominates $\mu_{\mathcal{B}}$ and $\sigma_{\mathcal{B}}$.

Empirically, the interior stability of Table~\ref{tab:sensitivity-sweep} confirms that whitening does not cancel the role signal across the operating range we use; degradation appears only when $\lambda$ or $|c_R|$ is pushed to the grid corners, exactly where the unnormalized correction grows large enough to compete with the outcome advantage.

\section{Empirical Role Distribution Audit on Logged Trajectories}
\label{app:role-audit}

\paragraph{Setup.}
We sampled six trajectories from production GRPO baseline runs of
\texttt{Qwen2.5-7B-Instruct}: three from ALFWorld and three from WebShop.
Trajectories were chosen to span the observed outcome distribution rather
than randomly: a clean efficient success, a long success containing
redundant action repeats, and (where available) a failure where the agent
committed early to an incorrect product or container. These six
trajectories are a subset of the hand-labeled set in
Appendix~\ref{app:judge-audit}; we reuse its adjudicated per-segment role
labels, which were produced by two annotators who did not participate in
defining the four-role taxonomy of Section~\ref{sec:taxonomy}
($D$ = decisive progress, $E$ = useful exploration, $N$ = no-progress
infrastructure, $R$ = regression) and adjudicated by a senior annotator,
and we apply that taxonomy to every environment-facing segment.
The audit below focuses on ALFWorld and WebShop trajectories with complete
per-segment logs; Search-QA examples are audited separately in
Appendix~\ref{app:judge-audit}.

\subsection{ALFWorld Trajectories}

\paragraph{A1. Clean optimal trajectory.}
Task: ``put a clean butterknife in diningtable''. Outcome: success, 6 steps,
raw environment reward 10. Role distribution: $5D + 1E + 0N + 0R$.
Table~\ref{tab:appA1} shows the per-segment role assignment.
This trajectory contains a single $E$ segment (the initial location guess)
and five $D$ segments completing the task.

\emph{Vanilla GRPO}: broadcasts $A^{\mathrm{GRPO}} = +(r - \bar r)/\sigma_r$
uniformly to all six segments. With no redundant or regressive segments
to absorb credit, this is essentially the right behavior.
\emph{\method{}}: under the hand-audited roles, the role-conditioned rule adds $\lambda c_D$ to the five $D$ segments and
$\lambda c_E$ to the initial $E$. Net effect is a slight concentration
of credit onto the decisive segments. This is the regime in which
\method{} and vanilla GRPO behave nearly identically; the point of
including this trajectory is to confirm that role-conditioning does not
hurt when the trajectory is already efficient.

\begin{table}[h]
\centering
\footnotesize
\setlength{\tabcolsep}{3pt}
\caption{Trajectory A1 (ALFWorld, clean success): per-segment hand role,
justification, and Qwen3-8B-thinking judge label (judge agreement 4/6).
Roles: D = decisive progress, E = useful exploration, N = no-progress,
R = regression.}
\label{tab:appA1}
\begin{tabular}{@{}cp{0.26\linewidth}cp{0.32\linewidth}cc@{}}
\toprule
$t$ & Action & Hand & Justification & Judge & Agree \\
\midrule
0 & go to countertop 1 & $E$ & First location, no prior evidence of butterknife position & $E$ & $\checkmark$ \\
1 & take butterknife 2 from countertop 1 & $D$ & Target object acquired & $D$ & $\checkmark$ \\
2 & go to sinkbasin 1 & $D$ & Navigate to required clean facility & $E$ & $\times$ \\
3 & clean butterknife 2 with sinkbasin 1 & $D$ & Required transformation & $D$ & $\checkmark$ \\
4 & go to diningtable 1 & $D$ & Navigate to destination & $E$ & $\times$ \\
5 & move butterknife 2 to diningtable 1 & $D$ & Final placement, reward triggers & $D$ & $\checkmark$ \\
\bottomrule
\end{tabular}
\end{table}

\paragraph{A2. Lucky-recovery success.}
Task: ``put a toiletpaper in toiletpaperhanger''. Outcome: success in 22
steps, raw environment reward 10. Role distribution: $4D + 7E + 1N + 10R$
(Table~\ref{tab:appA2}). The agent does not find the target until step 17
and spends the prior 16 steps re-examining the same toilet, returning to
already-visited locations, and repeating inventory checks. Ten
segments are clearly redundant repeats meeting the operational definition
of $R$; the final four are $D$ completing the task; seven are $E$
(genuine first-time inspections that yielded information); one is $N$
(an empty-handed traversal).

\emph{Vanilla GRPO}: applies positive $A^{\mathrm{GRPO}}$ uniformly to all
22 segments because the trajectory eventually succeeded. The 10 $R$
segments---repeated \texttt{examine toilet 1}, \texttt{inventory},
back-and-forth between two locations---all receive the same positive
reinforcement as the four decisive $D$ segments at the end. This is
exactly the failure mode above: success masks regression in hindsight credit.
\emph{\method{}}: under the hand-audited roles, steps 4, 6, 9--16 are $R$ and receive the
negative process reward $\lambda c_R$ from Section~\ref{sec:method}, which lowers their
segment advantage even though $r = 10$. The
preserved positive credit concentrates on the four closing $D$
segments and the genuine $E$ segments earlier in the trajectory.
Net effect: the trajectory contributes the same outcome signal but
roughly $4/22 \approx 18\%$ of its segment positions carry the bulk
of the gradient, against $22/22 = 100\%$ under vanilla GRPO.

\begin{table}[h]
\centering
\footnotesize
\setlength{\tabcolsep}{3pt}
\caption{Trajectory A2 (ALFWorld, success with extensive redundancy), all 22 steps; hand role, justification, and Qwen3-8B-thinking judge label (judge agreement 16/22).}
\label{tab:appA2}
\begin{tabular}{@{}cp{0.26\linewidth}cp{0.32\linewidth}cc@{}}
\toprule
$t$ & Action & Hand & Justification & Judge & Agree \\
\midrule
0 & go to toilet 1 & $E$ & Semantic guess, no prior info & $E$ & $\checkmark$ \\
1 & examine toilet 1 & $E$ & First inspection, info gain & $E$ & $\checkmark$ \\
2 & inventory & $E$ & Initial hand check & $E$ & $\checkmark$ \\
3 & go to toiletpaperhanger 1 & $N$ & Empty-handed traversal & $E$ & $\times$ \\
4 & examine toilet 1 & $R$ & Repeat of step 1 & $R$ & $\checkmark$ \\
5 & examine toiletpaperhanger 1 & $E$ & First inspection of hanger & $E$ & $\checkmark$ \\
6 & examine toilet 1 & $R$ & Repeat & $R$ & $\checkmark$ \\
7 & go to countertop 1 & $E$ & New location attempt & $E$ & $\checkmark$ \\
8 & examine countertop 1 & $E$ & First inspection & $E$ & $\checkmark$ \\
9 & go to toilet 1 & $R$ & Repeated return, no new info & $R$ & $\checkmark$ \\
10 & examine toilet 1 & $R$ & Repeat & $R$ & $\checkmark$ \\
11 & go to countertop 1 & $R$ & Repeat & $N$ & $\times$ \\
12 & go to toilet 1 & $R$ & Repeat & $N$ & $\times$ \\
13 & examine toilet 1 & $R$ & Repeat & $R$ & $\checkmark$ \\
14 & inventory & $R$ & Repeat, hand still empty & $N$ & $\times$ \\
15 & examine toilet 1 & $R$ & Repeat & $R$ & $\checkmark$ \\
16 & examine toilet 1 & $R$ & Repeat & $R$ & $\checkmark$ \\
17 & go to cabinet 1 & $E$ & First container attempt & $E$ & $\checkmark$ \\
18 & open cabinet 1 & $D$ & Reveals toiletpaper & $E$ & $\times$ \\
19 & take toiletpaper 1 from cabinet 1 & $D$ & Target acquired & $D$ & $\checkmark$ \\
20 & go to toiletpaperhanger 1 & $D$ & Navigate to destination & $N$ & $\times$ \\
21 & move toiletpaper 1 to toiletpaperhanger 1 & $D$ & Final placement, reward triggers & $D$ & $\checkmark$ \\
\bottomrule
\end{tabular}
\end{table}

\paragraph{A3. Pathological loop with lucky recovery.}
Task: ``put a cool apple in garbagecan''. Outcome: success in 34 steps,
raw environment reward 10. Role distribution: $5D + 8E + 1N + 20R$
(Table~\ref{tab:appA3}). The agent enters a tight loop of 15
consecutive \texttt{examine fridge 1} actions (steps 2--16) without
any state change, then explores other containers for another 12 steps
before acquiring the target apple at step 29 and completing the task
at step 33.

\emph{Vanilla GRPO}: a single positive trajectory advantage is broadcast
to all 34 segments, including the 15-step \texttt{examine fridge 1}
loop, providing \emph{direct gradient encouragement} for the policy to
repeat no-op observations. This is the most acute illustration in our
sample of success masking regression in hindsight credit. After thousands of such trajectories, the resulting
policy would be biased toward repeating idle inspections at the start
of every task.
\emph{\method{}}: under the hand-audited roles, steps 2--16 (the entire loop), steps
21--24 (alternating cabinet re-examines), and step 27 (countertop
re-examine) are $R$ and receive lower segment advantages with $\lambda c_R$. The remaining
positive role-reward mass concentrates on the genuine
first-time exploration (steps 0, 1, 17, 18, 20, 25, 26, 28) and the
five decisive segments at the end (29--33). Net effect: of 34
segments, 5 carry strong positive credit and 8 carry moderate
information-gain credit, against 34 carrying uniform positive credit
under vanilla GRPO. Under the hand-audited roles, the trajectory contributes the same outcome
signal but stops teaching the policy to enter the
\texttt{examine fridge 1} loop.

\begin{table}[h]
\centering
\footnotesize
\setlength{\tabcolsep}{3pt}
\caption{Trajectory A3 (ALFWorld, pathological loop followed by lucky
recovery), all 34 steps; hand role, justification, and Qwen3-8B-thinking judge label (judge agreement 27/34).}
\label{tab:appA3}
\begin{tabular}{@{}cp{0.26\linewidth}cp{0.32\linewidth}cc@{}}
\toprule
$t$ & Action & Hand & Justification & Judge & Agree \\
\midrule
0 & go to fridge 1 & $E$ & Semantic guess (fridge is the cool facility) & $E$ & $\checkmark$ \\
1 & open fridge 1 & $E$ & Reveals contents; apple not inside & $D$ & $\times$ \\
2 & examine fridge 1 & $R$ & Repeat of step 1 inspection & $E$ & $\times$ \\
3 & examine fridge 1 & $R$ & Repeat & $R$ & $\checkmark$ \\
4 & examine fridge 1 & $R$ & Repeat & $R$ & $\checkmark$ \\
5 & examine fridge 1 & $R$ & Repeat & $R$ & $\checkmark$ \\
6 & examine fridge 1 & $R$ & Repeat & $R$ & $\checkmark$ \\
7 & examine fridge 1 & $R$ & Repeat & $R$ & $\checkmark$ \\
8 & examine fridge 1 & $R$ & Repeat & $R$ & $\checkmark$ \\
9 & examine fridge 1 & $R$ & Repeat & $R$ & $\checkmark$ \\
10 & examine fridge 1 & $R$ & Repeat & $R$ & $\checkmark$ \\
11 & examine fridge 1 & $R$ & Repeat & $R$ & $\checkmark$ \\
12 & examine fridge 1 & $R$ & Repeat & $R$ & $\checkmark$ \\
13 & examine fridge 1 & $R$ & Repeat & $R$ & $\checkmark$ \\
14 & examine fridge 1 & $R$ & Repeat & $R$ & $\checkmark$ \\
15 & examine fridge 1 & $R$ & Repeat & $R$ & $\checkmark$ \\
16 & examine fridge 1 & $R$ & Repeat (15th consecutive \texttt{examine fridge}) & $R$ & $\checkmark$ \\
17 & go to cabinet 1 & $E$ & First container switch, info gain & $E$ & $\checkmark$ \\
18 & open cabinet 1 & $E$ & First inspection of new container & $D$ & $\times$ \\
19 & examine cabinet 2 & $N$ & Inspect without arriving at the cabinet & $E$ & $\times$ \\
20 & open cabinet 2 & $E$ & First inspection of cabinet 2 & $D$ & $\times$ \\
21 & examine cabinet 1 & $R$ & Repeat of step 18 & $R$ & $\checkmark$ \\
22 & examine cabinet 2 & $R$ & Repeat of step 20 & $R$ & $\checkmark$ \\
23 & examine cabinet 1 & $R$ & Repeat & $R$ & $\checkmark$ \\
24 & examine cabinet 2 & $R$ & Repeat & $R$ & $\checkmark$ \\
25 & go to countertop 1 & $E$ & New location & $E$ & $\checkmark$ \\
26 & examine countertop 1 & $E$ & First inspection & $E$ & $\checkmark$ \\
27 & examine countertop 1 & $R$ & Repeat & $R$ & $\checkmark$ \\
28 & go to countertop 2 & $E$ & New location & $E$ & $\checkmark$ \\
29 & take apple 1 from countertop 2 & $D$ & Target acquired & $D$ & $\checkmark$ \\
30 & go to fridge 1 & $D$ & Navigate to cool facility & $N$ & $\times$ \\
31 & cool apple 1 with fridge 1 & $D$ & Required transformation & $D$ & $\checkmark$ \\
32 & go to garbagecan 1 & $D$ & Navigate to destination & $E$ & $\times$ \\
33 & move apple 1 to garbagecan 1 & $D$ & Final placement, reward triggers & $D$ & $\checkmark$ \\
\bottomrule
\end{tabular}
\end{table}

\subsection{WebShop Trajectories}

\paragraph{W1. Clean optimal trajectory.}
Task: ``Find me hand wash men's sleep \& lounge with long sleeve,
elastic waistband, color: multi 9, size: medium, price $<$\$80''.
Outcome: success in 6 steps. Role distribution: $3D + 2E + 1N + 0R$
(Table~\ref{tab:appW1}). The agent issues a well-formed search query
containing all task constraints, clicks the first returned product for inspection,
selects the matching color and size attributes, and clicks
\texttt{buy now}. A duplicate \texttt{buy now} after task completion
is the only no-progress ($N$) segment.

\emph{Vanilla GRPO}: applies positive credit uniformly to all six
segments; the duplicate \texttt{buy now} receives the same
reinforcement as the three genuine decisive clicks.
\emph{\method{}}: under the hand-audited roles, the role-conditioned rule adds $\lambda c_D$ on the three verifier-facing $D$ segments, $\lambda c_E$ on the initial search and product inspection, and a small negative local correction
on the post-completion duplicate; net effect is a slight credit
concentration with no behavior change at this trajectory's outcome
level. As with A1, this trajectory exists to confirm that
\method{} does not degrade efficient short rollouts.

\begin{table}[h]
\centering
\footnotesize
\setlength{\tabcolsep}{3pt}
\caption{Trajectory W1 (WebShop, clean success): hand role, justification, and Qwen3-8B-thinking judge label (judge agreement 3/6).}
\label{tab:appW1}
\begin{tabular}{@{}cp{0.26\linewidth}cp{0.32\linewidth}cc@{}}
\toprule
$s$ & Action & Hand & Justification & Judge & Agree \\
\midrule
0 & search[hand wash men's sleep \& lounge \ldots multi] & $E$ & Spec-aligned initial search & $E$ & $\checkmark$ \\
1 & click[b09nd8p2qr] & $E$ & Initial product inspection & $R$ & $\times$ \\
2 & click[multi 9] & $D$ & Color attribute match & $D$ & $\checkmark$ \\
3 & click[medium] & $D$ & Size attribute match & $D$ & $\checkmark$ \\
4 & click[buy now] & $D$ & Reward triggers & $R$ & $\times$ \\
5 & click[buy now] & $N$ & Post-completion duplicate, harmless & $R$ & $\times$ \\
\bottomrule
\end{tabular}
\end{table}

\paragraph{W2. Long success with redundant attribute clicks.}
Task: ``Find me home office furniture sets, color: navy $\mid$ red,
shape: round, size: 3'7'' x 5'2'', price $<$\$70''. Outcome: success in
13 steps, raw environment reward 10. Role distribution: $4D + 2E + 2N + 5R$
(Table~\ref{tab:appW2}). After all attributes are selected by step 4,
the agent re-clicks the same three attributes (size, shape, color)
four more times before finally clicking \texttt{buy now} at step 9,
then clicks \texttt{buy now} two more times after the purchase is
recorded.

\emph{Vanilla GRPO}: applies positive credit to all 13 segments.
The five redundant attribute re-clicks at steps 5--8 and 10 receive
the same reinforcement as the genuine attribute selection at steps
2--4 and the \texttt{buy now} at step 9. Training on many such
trajectories teaches the policy a wrong lesson: that re-clicking
already-selected attributes is part of the successful template.
\emph{\method{}}: under the hand-audited roles, steps 5, 6, 7, 8, 10 are $R$ and receive lower
segment advantages through the bounded correction $\lambda c_R$. Net effect: instead of 13 segments sharing the outcome
credit equally, the four $D$ segments (containing the actual purchase
logic) receive relatively higher segment advantages, while the redundant
re-clicks receive lower relative credit. This
trajectory is the most concrete WebShop instance of success masking regression
because the wrong-lesson risk is quantitatively measurable: each redundant attribute re-click
under vanilla GRPO contributes the same positive log-likelihood
gradient as a legitimate $D$ action.

\begin{table}[h]
\centering
\footnotesize
\setlength{\tabcolsep}{3pt}
\caption{Trajectory W2 (WebShop, success with redundant attribute clicks): hand role, justification, and Qwen3-8B-thinking judge label (judge agreement 6/13).}
\label{tab:appW2}
\begin{tabular}{@{}cp{0.26\linewidth}cp{0.32\linewidth}cc@{}}
\toprule
$s$ & Action & Hand & Justification & Judge & Agree \\
\midrule
0 & search[home office furniture sets navy red round \ldots] & $E$ & Spec-aligned initial search & $D$ & $\times$ \\
1 & click[b07fkgqkz1] & $E$ & Initial product inspection & $E$ & $\checkmark$ \\
2 & click[3 ft 7 in x 5 ft 2 in] & $D$ & Size selected & $E$ & $\times$ \\
3 & click[round] & $D$ & Shape selected & $E$ & $\times$ \\
4 & click[navy $\mid$ red] & $D$ & Color selected; all attributes set & $E$ & $\times$ \\
5 & click[3 ft 7 in x 5 ft 2 in] & $R$ & Redundant size re-click & $R$ & $\checkmark$ \\
6 & click[round] & $R$ & Redundant shape re-click & $R$ & $\checkmark$ \\
7 & click[navy $\mid$ red] & $R$ & Redundant color re-click & $R$ & $\checkmark$ \\
8 & click[3 ft 7 in x 5 ft 2 in] & $R$ & Second redundant size re-click & $R$ & $\checkmark$ \\
9 & click[buy now] & $D$ & Reward triggers & $R$ & $\times$ \\
10 & click[navy $\mid$ red] & $R$ & Post-purchase attribute re-click & $R$ & $\checkmark$ \\
11 & click[buy now] & $N$ & Post-completion duplicate & $R$ & $\times$ \\
12 & click[buy now] & $N$ & Post-completion duplicate & $R$ & $\times$ \\
\bottomrule
\end{tabular}
\end{table}

\paragraph{W3. Failure from early commit to wrong product.}
Task: ``Find me non slip desks for living room, color: christmasgoo3302,
size: 19.7x31.5in+19.7x63in, price $<$\$50''. Outcome: failure in 11
steps, raw environment reward 0. Role distribution: $0D + 3E + 0N + 8R$
(Table~\ref{tab:appW3}). The initial search returns a Christmas
kitchen mat (B09CQ45ZRB) as the top result. The agent clicks it at
step 1, incorrectly committing to a non-desk product. Subsequent
steps issue two reformulated searches that re-rank the same item to
the top, and the agent clicks the same wrong product again at step 6.
Steps 7--10 attempt attribute clicks and a purchase against the wrong
product. The bottleneck error is step 1; the second-chance failure
is step 6.

\emph{Vanilla GRPO}: applies negative credit uniformly to all 11
segments because $r = 0$. This includes the two legitimate recovery
search attempts at steps 4 and 5, which the agent should be
\emph{encouraged} to take after recognizing the wrong commitment.
Uniform negative reinforcement teaches the policy to avoid recovery
search-after-mistake, the exact opposite of the desired behavior.
\emph{\method{}}: under the hand-audited roles, steps 0, 4, 5 are $E$ (legitimate
exploration: initial good-faith search and two recovery attempts).
Under the rule in Section~\ref{sec:method}, $E$ in a failed
trajectory receives the bounded process reward $\lambda c_E$ rather than only the negative outcome credit.
Steps 1, 6 (both clicks of the wrong product) are $R$ and receive strong negative
credit from $\lambda c_R$. This illustrates outcome-mixed exploration: the recovery searches at
steps 4--5 are useful exploration appearing inside a failure
trajectory, and outcome-only credit assigns them the same negative
sign as the wrong-product clicks. Net effect: the policy learns
``do not click the wrong product twice'' (the steps 1 and 6 lesson)
without also learning ``do not re-search after a mistake'' (the
spurious lesson vanilla GRPO would teach).

\begin{table}[h]
\centering
\footnotesize
\setlength{\tabcolsep}{3pt}
\caption{Trajectory W3 (WebShop, failure via early wrong commitment): hand role, justification, and Qwen3-8B-thinking judge label (judge agreement 8/11).}
\label{tab:appW3}
\begin{tabular}{@{}cp{0.26\linewidth}cp{0.32\linewidth}cc@{}}
\toprule
$s$ & Action & Hand & Justification & Judge & Agree \\
\midrule
0 & search[non-slip desk Christmasgoo3302 \ldots] & $E$ & Good-faith spec-aligned search & $E$ & $\checkmark$ \\
1 & click[b09cq45zrb] & $R$ & Wrong product type (Christmas kitchen mat, not a desk) & $R$ & $\checkmark$ \\
2 & click[19.7x31.5in+19.7x63in] & $R$ & Attribute click on wrong product & $D$ & $\times$ \\
3 & click[19.7x31.5in+19.7x63in] & $R$ & Redundant repeat & $R$ & $\checkmark$ \\
4 & search[non slip desk christmasgoo3302 \ldots] & $E$ & Recovery attempt: re-search & $E$ & $\checkmark$ \\
5 & search[non slip desk color: christmasgoo3302 \ldots] & $E$ & Recovery attempt: refined search & $D$ & $\times$ \\
6 & click[b09cq45zrb] & $R$ & Re-clicks same wrong product & $R$ & $\checkmark$ \\
7 & click[christmasgoo3302] & $R$ & Attribute click on wrong product & $D$ & $\times$ \\
8 & click[christmasgoo3302] & $R$ & Redundant & $R$ & $\checkmark$ \\
9 & click[19.7x31.5in+19.7x63in] & $R$ & Attribute click on wrong product & $R$ & $\checkmark$ \\
10 & click[buy now] & $R$ & Purchases wrong product, reward stays 0 & $R$ & $\checkmark$ \\
\bottomrule
\end{tabular}
\end{table}

\subsection{Aggregate Observations}

Table~\ref{tab:appAggregate} summarizes the role distribution in the six audited trajectories.

\begin{table}[h]
\centering
\caption{Role distribution in logged GRPO rollouts (length-weighted mean over the audited ALFWorld and WebShop trajectories). Regression mass is high in both environments---exactly where \method{} yields its largest gains---and much of it sits inside successful trajectories that vanilla GRPO still reinforces.}
\label{tab:appAggregate}
\small
\setlength{\tabcolsep}{6pt}
\begin{tabular}{lcccc}
\toprule
Environment & $D$ & $E$ & $N$ & $R$ \\
\midrule
ALFWorld & 23\% & 26\% & 3\% & 48\% \\
WebShop  & 23\% & 23\% & 10\% & 43\% \\
\bottomrule
\end{tabular}
\end{table}

The main takeaway is that regression is common in these logged rollouts, especially as redundant repetition rather than irreversible state corruption. Several successful trajectories contain substantial $R$ mass, so vanilla GRPO would still broadcast positive credit to repeated inspections or repeated attribute clicks. This makes $R$-in-success the most important diagnostic cell for \method{} and motivates calibrating the role-conditioned mixing coefficient $\lambda$ on a small per-environment annotated sample.

\section{Judge Model Audit: Side-by-Side Hand vs Qwen3-8B-Thinking Labels}
\label{app:judge-audit}

\paragraph{Role-judge context window.}
\label{app:judge-context-window}

For a segment $k$, the training-time role judge sees a bounded local window around that segment. In our implementation, the window contains the task goal, up to five previous action--observation pairs $(a_{i,k-5},o_{i,k-5},\ldots,a_{i,k-1},o_{i,k-1})$, the current action $a_{i,k}$, the immediate resulting observation $o_{i,k}$, and up to five future action--observation pairs $(a_{i,k+1},o_{i,k+1},\ldots,a_{i,k+5},o_{i,k+5})$ when they exist. Boundary cases use the available prefix or suffix.

The short future window helps identify whether an exploratory segment enabled later progress or whether an apparently harmless step was redundant. We do not feed the entire trajectory to every segment-level judge call because long inputs make repeated high-quality judging expensive and empirically make the classifier less focused on the local causal role.

Controlling the input length keeps the role classifier usable at segment scale and reduces the chance that it relies on distant recovery patterns instead of the current action. The judge still does not receive the final verifier outcome or an unbounded future trajectory, so the role label diagnoses local causal behavior rather than copying the trajectory-level reward that GRPO already supplies.

\paragraph{Setup.}
We audit a Qwen3-8B judge with thinking mode enabled on 18 logged
trajectories (9 success, 9 failure) across three environments: 3 ALFWorld
(captured from the trained GRPO policy), 3 WebShop (trained policy), and
12 Search-QA (base-model rollouts to obtain failure-rich data). To keep the
ground truth independent of the rubric design, two annotators who did not
participate in defining the role taxonomy of Section~\ref{sec:taxonomy} each
labeled all 135 segments independently. The two annotators reached
\textbf{88.1}\% raw label agreement (119 of 135 segments); segments on which they disagreed were
adjudicated by a senior annotator, and the adjudicated labels are used as
ground truth. For each audited segment, the judge
was given the same bounded window used during training: the task, up to five
previous action--observation pairs, the current action and immediate
observation, and up to five future action--observation pairs when available.
The judge was not given the final verifier outcome or the unbounded full
trajectory. It was asked to output one role for the current segment using the
Qwen3 chat-template \texttt{enable\_thinking=True} flag. All inference used
temperature $0$.
Together with the merged ALFWorld and WebShop tables in Appendix~\ref{app:role-audit}, this appendix reports every trajectory with both hand and judge labels per step. Aggregate judge metrics are reported in Table~\ref{tab:judge-audit-f1}.

\paragraph{Judge prompt.}
The audit used the following role-classification prompt. We require a short evidence string for every segment, which forces the judge to ground each label in the local action--observation context rather than emitting only a free-floating role tag; in practice this makes label audits easier and improves judge consistency.

\begin{verbatim}
You are an expert evaluator of multi-turn agent trajectories.

You will see a local window around one target segment: the task, up to
five previous action-observation pairs, the CURRENT action and observation,
and up to five future action-observation pairs. Classify only the CURRENT
action into ONE of four roles:

    D (DECISIVE)    The action completes a required sub-goal or makes a
                                    verifier-checkable state change directly required by
                                    the task (e.g. takes the target object, performs a
                                    required transformation like cool/heat/clean, places
                                    the target in the destination, executes the final
                                    purchase, selects a task-mandated attribute).

    E (EXPLORATION) The action gathers information or visits a new state
                                    for the first time without completing a sub-goal.
                                    First-time inspection of a container, first navigation
                                    to a candidate location, an initial search query,
                                    a refined search after recognizing a wrong commitment.

    N (NO-PROGRESS) The action neither changes the task state nor reveals
                                    new information. Empty-handed traversal, harmless
                                    duplicate after task completion, generic navigation
                                    through an irrelevant location with no investigation.

    R (REGRESSION)  Clear setback: the action either corrupts state,
                                    picks the wrong object, commits to a non-matching
                                    product, performs the wrong transformation, OR is a
                                    redundant repeat of an already-completed action that
                                    yields no new information ("examine X" when X was just
                                    examined; re-click of an already-selected attribute;
                                    re-purchase after success).

CALIBRATION RULES
    - Judge LOCAL causal role using only the supplied window. Do not infer
        credit from distant recovery or distant failure outside the window.
    - For the current step, provide brief evidence grounded in the local
        action/observation, e.g. "first inspection reveals new object",
        "repeat with no new information", or "correct target acquired".
    - First-time examine/inspect = E. Second-time examine of the same
        target without state change = R.
    - "Nothing happens." in observation means the action was invalid;
        if action repeats, label R.
    - A buy/place/take/heat/cool of the correct target = D.
    - Re-click of already-selected attribute = R, even if the local observation
        reports success.

OUTPUT FORMAT
After your reasoning, output ONLY a JSON object on a single line at
the very end:
{"labels": ["D"|"E"|"N"|"R", ...], "evidence": ["short reason per step", ...]}
Both lists must have length equal to the number of steps shown.
\end{verbatim}

\paragraph{ALFWorld and WebShop trajectories.}
The six ALFWorld and WebShop trajectories audited here (A1--A3, W1--W3) are
the same rollouts analyzed in Appendix~\ref{app:role-audit}. To avoid
duplicating their per-step action listings, their hand labels, Qwen3-8B-thinking
judge labels, and per-step agreement are reported together with the
role-distribution analysis in Tables~\ref{tab:appA1}--\ref{tab:appW3} (judge
agreement per trajectory is stated in each caption). The Search-QA
trajectories below are audited only here.

\paragraph{Search-QA trajectory summary.}
Table~\ref{tab:appJ-SQ-summary} summarizes all 12 Search-QA audit trajectories. The table keeps the outcome, question, number of search turns, final answer, hand-label sequence, judge-label sequence, and agreement count; Table~\ref{tab:appJ-SQ-F5} then gives the only Search-QA disagreement case step by step.

\begin{table}[h]
\centering
\scriptsize
\setlength{\tabcolsep}{2pt}
\caption{Search-QA audit summary. Label sequences are ordered by environment-facing segment; $S^m\!\to\!A$ denotes $m$ search turns followed by one answer turn.}
\label{tab:appJ-SQ-summary}
\begin{tabular}{@{}p{0.08\linewidth}p{0.07\linewidth}p{0.33\linewidth}p{0.09\linewidth}p{0.11\linewidth}p{0.11\linewidth}p{0.11\linewidth}@{}}
\toprule
ID & Outcome & Question / final answer & Pattern & Hand labels & Judge labels & Agreement \\
\midrule
SQ-F1 & fail & first Nobel Prize in Physics / Wilhelm Röntgen & $S^2\to A$ & E,E,R & E,E,R & 3/3 \\
SQ-F2 & fail & next Deadpool movie release / Deadpool 3 & $S^3\to A$ & E,E,E,R & E,E,E,R & 4/4 \\
SQ-F3 & fail & short-wave broadcast mode / AM & $S^3\to A$ & E,E,E,R & E,E,E,R & 4/4 \\
SQ-F4 & fail & southwest wind across Nigeria / February and June & $S^3\to A$ & E,E,E,R & E,E,E,R & 4/4 \\
SQ-F5 & fail & first declaration of human rights / John Peters Humphrey & $S^3\to A$ & E,E,R,R & E,E,E,R & 3/4 \\
SQ-F6 & fail & next \emph{Scandal} episode / April 19, 2018 & $S^3\to A$ & E,E,R,R & E,E,R,R & 4/4 \\
SQ-F7 & fail & Philadelphia last Super Bowl win / 2018 & $S^2\to A$ & E,E,R & E,E,R & 3/3 \\
SQ-F8 & fail & first lady nominated to Rajya Sabha / Rajvanshi Devi & $S^3\to A$ & E,R,R,R & E,R,R,R & 4/4 \\
SQ-S1 & success & Swan Lake, Sleeping Beauty, Nutcracker composer / Pyotr Ilyich Tchaikovsky & $S^2\to A$ & E,E,D & E,E,D & 3/3 \\
SQ-S2 & success & Dragon Ball Z episode count / 291 & $S^2\to A$ & E,E,D & E,E,D & 3/3 \\
SQ-S3 & success & garden city of New Earswick designer / Raymond Unwin & $S^3\to A$ & E,E,E,D & E,E,E,D & 4/4 \\
SQ-S4 & success & filming location of \emph{The Curse of Oak Island} / Oak Island & $S^2\to A$ & E,E,D & E,E,D & 3/3 \\
\bottomrule
\end{tabular}
\end{table}

\paragraph{Representative Search-QA disagreement.}
SQ-F5 is the only Search-QA trajectory in this audit where Qwen3-8B-thinking disagrees with the adjudicated hand labels. The disagreement is instructive: the third search shifts from the human-rights question to a declaration-of-independence query, so annotators mark it as regression, while the judge still treats it as exploration.

\begin{table}[h]
\centering
\footnotesize
\caption{Trajectory SQ-F5: representative Search-QA disagreement.}
\label{tab:appJ-SQ-F5}
\begin{tabular}{@{}cp{0.58\linewidth}ccc@{}}
\toprule
$t$ & Action & Hand & Qwen3-8B-think & Agree \\
\midrule
0 & \textless{}search\textgreater{}who wrote the first declaration of human rights\textless{}/sea & $E$ & $E$ & $\checkmark$ \\
1 & \textless{}search\textgreater{}who wrote the first declaration of human rights decl & $E$ & $E$ & $\checkmark$ \\
2 & \textless{}search\textgreater{}who wrote the declaration of independence\textless{}/search\textgreater{} & $R$ & $E$ & $\times$ \\
3 & \textless{}answer\textgreater{}John Peters Humphrey\textless{}/answer\textgreater{} & $R$ & $R$ & $\checkmark$ \\
\bottomrule
\end{tabular}
\end{table}

\paragraph{Audit blind spot.}
None of the nine failure trajectories in this set contain any
hand-labeled $D$ segment. This is a structural property of the
calibration set rather than a sampling artifact: WebShop W3 commits
to the wrong product at step~1 and accumulates only $R$ thereafter,
and the eight Search-QA failures all terminate with a wrong
$\langle$answer$\rangle$ ($R$) after a sequence of $\langle$search$\rangle$
queries ($E$). The cell ``$D$ in failed rollouts'' is therefore not measurable
on this calibration set. Verifying that the judge correctly
identifies decisive intermediate progress \emph{within failed
trajectories} (for example, an ALFWorld agent that correctly heats
the target object but then places it in the wrong receptacle, or a
Search-QA agent that correctly identifies the bridge entity but
issues a malformed final answer) is the principal extension
required of a larger follow-up audit.

\section{Verbatim Search-QA Query Reformulations}
\label{app:sq-queries}

For completeness, this section restores the exact per-step text behind the
compressed label sequences of Table~\ref{tab:appJ-SQ-summary}. For every one
of the twelve audited Search-QA trajectories we list each environment-facing
segment in order: search-turn queries followed by the final answer turn,
reproduced verbatim from the logged rollout (a few strings were truncated at
capture time and are marked with ``\ldots''). Each segment is tagged with its
adjudicated hand role $[D/E/N/R]$; the single judge disagreement in this set is
SQ-F5 segment~2, where the judge predicts $E$ instead of the adjudicated $R$,
marked $[R\,|\,\text{judge }E]$. Segments separated by ``$\;\mid\;$''; the
answer turn is prefixed by $a$.

\paragraph{Failure trajectories.}
\begin{description}[leftmargin=2.2em,style=nextline,itemsep=2pt,topsep=2pt]
\item[SQ-F1] (answer ``Wilhelm Röntgen'')
$s_0$\,[E] first nobel prize in physics $\;\mid\;$ $s_1$\,[E] Wilhelm Röntgen first Nobel Prize in Physics $\;\mid\;$ $a$\,[R] Wilhelm Röntgen.
\item[SQ-F2] (answer ``Deadpool 3'')
$s_0$\,[E] next Deadpool movie release $\;\mid\;$ $s_1$\,[E] next Deadpool movie release date $\;\mid\;$ $s_2$\,[E] Deadpool 3 movie release date $\;\mid\;$ $a$\,[R] Deadpool 3.
\item[SQ-F3] (answer ``AM'')
$s_0$\,[E] short wave broadcast service $\;\mid\;$ $s_1$\,[E] short wave broadcast service am $\;\mid\;$ $s_2$\,[E] short wave broadcast service mode $\;\mid\;$ $a$\,[R] AM.
\item[SQ-F4] (answer ``February and June'')
$s_0$\,[E] south west wind blows across Nigeria $\;\mid\;$ $s_1$\,[E] south west wind blows across Nigeria between $\;\mid\;$ $s_2$\,[E] south west wind blows across Nigeria between Februar\ldots $\;\mid\;$ $a$\,[R] February and June.
\item[SQ-F5] (answer ``John Peters Humphrey'')
$s_0$\,[E] who wrote the first declaration of human rights $\;\mid\;$ $s_1$\,[E] who wrote the first declaration of human rights decl\ldots $\;\mid\;$ $s_2$\,$[R\,|\,\text{judge }E]$ who wrote the declaration of independence $\;\mid\;$ $a$\,[R] John Peters Humphrey.
\item[SQ-F6] (answer ``April 19, 2018'')
$s_0$\,[E] ``Scandal'' TV show $\;\mid\;$ $s_1$\,[E] Scandal next episode $\;\mid\;$ $s_2$\,[R] Scandal next episode (verbatim repeat of $s_1$) $\;\mid\;$ $a$\,[R] April 19, 2018.
\item[SQ-F7] (answer ``2018'')
$s_0$\,[E] Philadelphia last Super Bowl win $\;\mid\;$ $s_1$\,[E] Philadelphia Eagles last Super Bowl win $\;\mid\;$ $a$\,[R] 2018.
\item[SQ-F8] (answer ``Rajvanshi Devi'')
$s_0$\,[E] first lady nominated member of the rajya sabha $\;\mid\;$ $s_1$\,[R] first lady nominated member of the rajya sabha (repeat) $\;\mid\;$ $s_2$\,[R] first lady nominated member of the rajya sabha (repeat) $\;\mid\;$ $a$\,[R] Rajvanshi Devi.
\end{description}

\paragraph{Success trajectories.}
\begin{description}[leftmargin=2.2em,style=nextline,itemsep=2pt,topsep=2pt]
\item[SQ-S1] (answer ``Pyotr Ilyich Tchaikovsky'')
$s_0$\,[E] ``Swan Lake the Sleeping Beauty and the Nutcracker fa\ldots $\;\mid\;$ $s_1$\,[E] Swan Lake the Sleeping Beauty and the Nutcracker fam\ldots $\;\mid\;$ $a$\,[D] Pyotr Ilyich Tchaikovsky.
\item[SQ-S2] (answer ``291'')
$s_0$\,[E] Dragon Ball Z episodes $\;\mid\;$ $s_1$\,[E] Dragon Ball Z episodes 291 $\;\mid\;$ $a$\,[D] 291.
\item[SQ-S3] (answer ``Raymond Unwin'')
$s_0$\,[E] garden city of new earswick $\;\mid\;$ $s_1$\,[E] who designed the model village of new earswick $\;\mid\;$ $s_2$\,[E] Raymond Unwin designed the garden city of new earswi\ldots $\;\mid\;$ $a$\,[D] Raymond Unwin.
\item[SQ-S4] (answer ``Oak Island'')
$s_0$\,[E] ``The Curse of Oak Island filmed'' $\;\mid\;$ $s_1$\,[E] where is the tv show the curse of oak island filmed $\;\mid\;$ $a$\,[D] Oak Island.
\end{description}

\end{document}